\documentclass{article} 
\usepackage{iclr2026_conference,times}
\usepackage[subfig,algorithmse]{definition}

\usepackage{amsmath,amsfonts,bm}

\def\eqref#1{equation~\ref{#1}}

\def\1{\bm{1}}

\DeclareMathAlphabet{\mathsfit}{\encodingdefault}{\sfdefault}{m}{sl}
\SetMathAlphabet{\mathsfit}{bold}{\encodingdefault}{\sfdefault}{bx}{n}

\usepackage{hyperref}
\usepackage{url}

\usepackage{url}            
\usepackage{booktabs}       
\usepackage{amsfonts}       
\usepackage{nicefrac}       
\usepackage{microtype}      
\usepackage[dvipsnames]{xcolor}            

\usepackage{eqparbox}
\usepackage{tikz}
\usepackage{pgfplots}
\usepgfplotslibrary{groupplots}   
\pgfplotsset{compat=1.17}

\usepackage{microtype}
\usepackage{graphicx}
\usepackage{booktabs} 
\usepackage{float}
\usepackage{enumitem}
\usepackage{subcaption}

\usepackage{framed} 
\usepackage[most]{tcolorbox}
\usepackage{xcolor}
\colorlet{shadecolor}{orange!15}

\usepackage{mathtools}

\newcommand{\blue}[1]{$_{\color{RoyalBlue}\downarrow #1}$}
\newcommand{\red}[1]{$_{\color{RedOrange}\uparrow #1}$}

\usepackage{adjustbox}
\usepackage{multirow}

\usepackage{amsmath}
\usepackage{amssymb}
\usepackage{mathtools}
\usepackage{amsthm}
\usepackage[capitalize,noabbrev]{cleveref}
\usepackage{subcaption}

\theoremstyle{plain}
\usepackage{placeins}

\theoremstyle{definition}
\theoremstyle{remark}
\theoremstyle{proposition}

\allowdisplaybreaks

\usepackage[textsize=tiny]{todonotes}

\definecolor{baseline}{HTML}{59cce3}
\definecolor{infobatch}{HTML}{feb461}
\definecolor{orderdp}{HTML}{ff7f46}
\definecolor{mark}{HTML}{DAE8FC}

\newcommand{\themodel}{\textbf{\texttt{OrderDP}}\xspace}

\title{OrderDP: A Theoretically Guaranteed Lossless Dynamic Data Pruning Framework}

\author{
Chenhan Jin$^{1}$\thanks{These authors contributed equally to this work.}\;
Shengze Xu$^{1}$\footnotemark[1]\;
Qingsong Wang$^{5}$\;
Fan Jia$^{6}$\;
Dingshuo Chen$^{4}$\footnotemark[2]\;
Tieyong Zeng$^{2,3}$\thanks{Corresponding author: \texttt{tieyongzeng@bnbu.edu.cn}, \texttt{dingshuo.chen@cripac.ia.ac.cn}}
\\
$^{1}$The Chinese University of Hong Kong \quad
$^{2}$Beijing Normal-Hong Kong Baptist University \\
$^{3}$Guangzhou Nanfang College\quad
$^{4}$Institute of Automation, Chinese Academy of Sciences \\
$^{5}$Xiangtan University \quad
$^{6}$University of Utah
}

\iclrfinalcopy 
\begin{document}

\maketitle

\begin{abstract}
Data pruning (DP), as an oft-stated strategy to alleviate heavy training burdens, reduces the volume of training samples according to a well-defined pruning method while striving for near-lossless performance. However, existing approaches, which commonly select highly informative samples, can lead to biased gradient estimation compared to full-dataset training. Furthermore, the analysis of this bias and its impact on final performance remains ambiguous. To address these challenges, we propose \themodel, a plug-and-play framework that aims to obtain stable, unbiased, and near-lossless training acceleration with theoretical guarantees. Specifically, \themodel first randomly selects a subset and then chooses the top-$q$ samples, where unbiasedness is established with respect to a surrogate loss. This ensures that \themodel conducts unbiased training in terms of the surrogate objective. We further establish convergence and generalization analyses, elucidating how \themodel affects optimal performance and enables well-controlled acceleration while ensuring guaranteed final performance. Empirically, we evaluate \themodel against comprehensive baselines on CIFAR-10, CIFAR-100, and ImageNet-1K, demonstrating competitive accuracy, stable convergence, and exact control---all with a simpler design and faster runtime, while reducing training cost by over 40\%. Delivering both strong performance and computational efficiency, our method serves as a robust and easily adaptable tool for data-efficient learning. The code is publicly available at \url{https://github.com/shengze-xu/OrderDP}.
\end{abstract}

\section{Introduction}

Neural scaling laws have revealed a consistent empirical pattern across a wide range of domains \citep{amari1992four, hestness2017deep, kaplan2020scaling}: model performance tends to improve predictably, often as a power law \citep{hernandez2021scaling, cherti2023reproducible,  chen2023uncovering}, with increased model size and the data volume. This observation has fueled a surge in computational demands and financial costs, as larger models and datasets are leveraged to push the boundary of model capabilities. In this context, data pruning (DP) has emerged as a promising strategy to alleviate training costs by selectively removing less informative samples \citep{killamsetty2021glister, mirzasoleiman2020coresets, qin2024infobatch, raju2021ddp}, offering a pathway to enhance training efficiency without compromising model performance.

\begin{figure}[h]
\center
\includegraphics[width=0.9\textwidth,height=!]{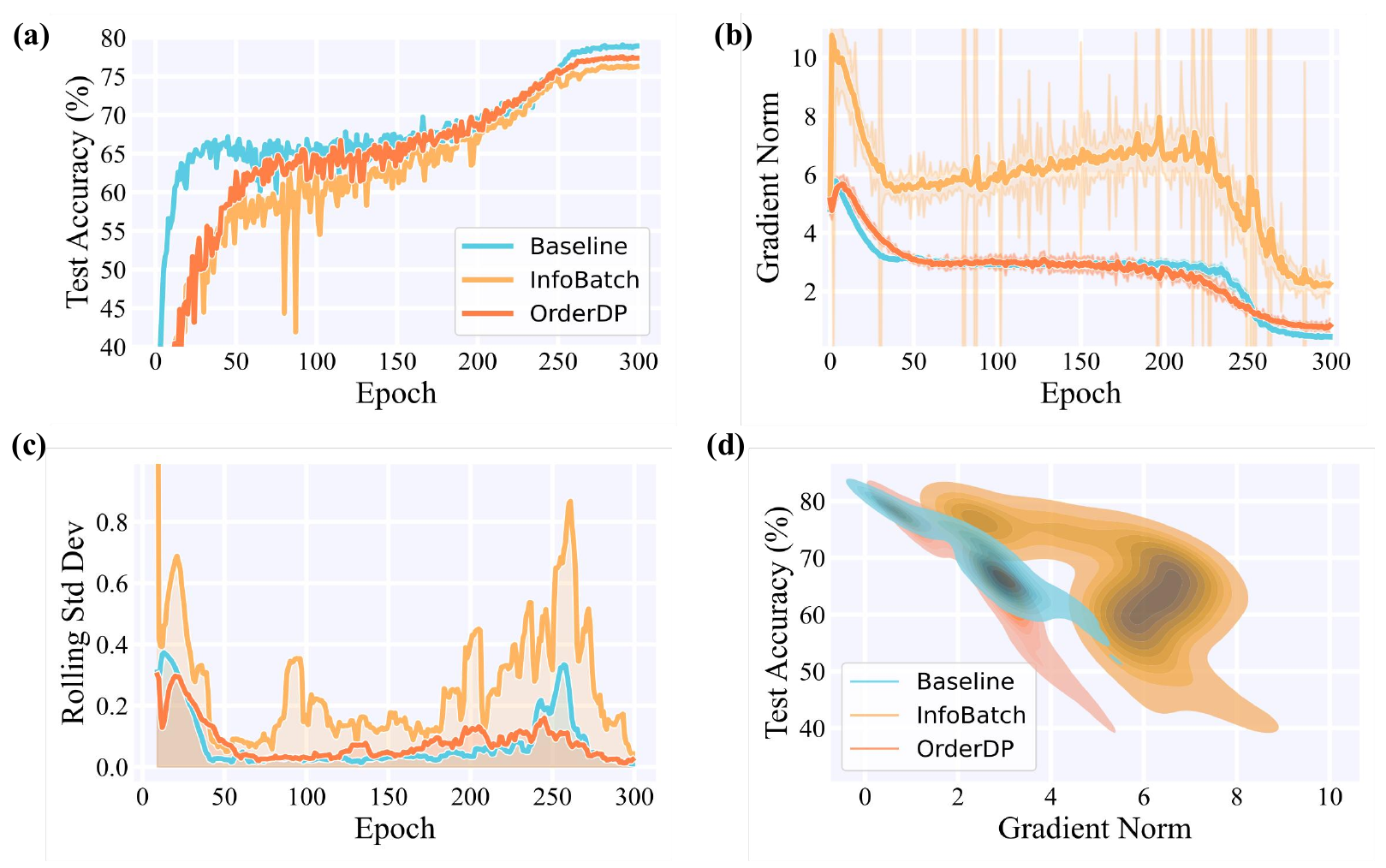}
\caption{\textbf{Training dynamics of ResNet-18 on the CIFAR-100 under a 70\% data pruning ratio.} Method comparison: full-dataset training (\textcolor{baseline}{\textbf{Baseline}}), representative dynamic pruning strategy (\textcolor{infobatch}{\textbf{InfoBatch}}), and our proposed method (\textcolor{orderdp}{\textbf{OrderDP}}).
\textbf{(a-c)} Test accuracy, gradient norm (shadow area denotes standard deviation), and temporal stability of gradient norm over training epochs.
\textbf{(d)} Joint distribution of test accuracy and gradient norm throughout training.}
\label{fig:motivation}
\vspace{-20pt}
\end{figure}

Depending on when sample selection is performed, data pruning strategies can be broadly classified into \textbf{\emph{static pruning}} and \textbf{\emph{dynamic pruning}}.
\ding{172} Static pruning assigns an informativeness score to each training sample \underline{\emph{before}} training, typically using data influence functions \citep{borsos2020coresets, koh2017understanding, yang2022dataset} or coreset selection strategies \citep{huggins2016coresets,campbell2019automated, kim2023coreset}. 
\ding{173} Dynamic pruning, on the other hand, performs sample selection \underline{\emph{during}} training, updating scores on-the-fly based on evolving model states or gradients \citep{raju2021ddp, qin2024infobatch, chen2024beyond}. By continuously adapting to the training dynamics, it can better identify and retain the most influential samples at each stage, potentially yielding higher performance under constrained training budgets. A more comprehensive survey of static and dynamic pruning methods is included in Appendix~\ref{app:related}.

Within supervised learning, data pruning aims to reduce data volume without sacrificing performance, thereby achieving near-lossless\footnote{Here, near-lossless means matching full-data accuracy up to normal stochastic 
fluctuations (typically within 0.1\%) while achieving a noticeable training speedup.} pruning. However, the discarded data can cause the distribution shift and bias gradient. Although selecting a portion of the discarded data randomly via calibration protocols \citep{ayed2023data} can theoretically ensure unbiasedness, finding the optimal proportion can be difficult in practice. Inspired by this method, recent dynamic methods such as InfoBatch \citep{qin2024infobatch} achieve this goal by rescaling the bias gradient toward the expected loss. However, when training on a specific dataset, gradient bias in both scale and direction may still arise from the discrepancy between empirical and expected loss. This bias is further amplified under extreme pruning, where large scaling factors are applied and stabilization techniques such as annealing are often required. These challenges reveal an incomplete understanding of the principles underlying ``near-lossless'' pruning. A critical questions arise as to \emph{what ensures this property? how the bias should be analyzed, and whether pruning can be pushed further toward more extreme regimes?} To investigate these questions, we conduct comparative studies and report a representative result on CIFAR-100, comparing full-dataset training with InfoBatch \citep{qin2024infobatch} under a 70\% pruning ratio to test its limits. Further experiments are presented in Appendix~\ref{sec:app_exp}, \ref{app:gradient_bias}, and we highlight several key observations as follows:

\ding{182} \emph{Gradient norm serves as a reliable proxy for model performance}: Under full-dataset training, test accuracy exhibits a strong linear correlation with gradient norm (Pearson’s $R=-0.93$), as shown in \cref{fig:motivation} (d), which suggest the magnitude of gradients is a stable and informative indicator of both training progress and generalization, which echoes prior observations in related studies \citep{zhao2022penalizing, zhang2023gradient}. 

\ding{183} \emph{Dynamic data pruning suffers from training instability}: Compared to full-dataset training, dynamic one displays pronounced fluctuations in test accuracy and volatile gradient norms. Moreover, the rolling standard deviation reveals irregular and noisy optimization dynamics, indicating reduced training stability, as shown in \cref{fig:motivation} (a-c).

\ding{184} \emph{Gradient estimation under dynamic pruning is still biased}: \cref{fig:motivation} (b,d) demonstrate that the dynamic method induces a noticeable shift in the overall scale of gradient norms relative to the baseline. This shift is more significant for InfoBatch, as a large scaling factor is imposed. The previously observed linear relationship between accuracy and gradient norm weakens, suggesting that it still distorts gradient estimates and introduces bias during training.

Together, these findings highlight that training instability and biased gradient estimation are two key limitations of existing dynamic DP strategies. We provide an extended empirical analysis of gradient bias in Appendix~\ref{app:gradient_bias}. To this end, we take a step towards designing a DP method to mitigate both issues, thus achieving \textbf{stable, unbiased, and near-lossless} data pruning. Inspired by the recent stochastic optimization based on ordering statistics \citep{kawaguchi2020ordered,mehta2023stochastic}, we propose a simple yet effective DP framework, \themodel, which aims to obtain near-lossless pruning with improved efficiency, even at large pruning ratios. At the beginning of each epoch, we randomly sample a batch of data points from the full dataset to form a pruning candidate pool. These candidates are then ranked in descending order based on their loss values, and the Top-$q$ samples are selected as the most informative ones for model training. We formulate \themodel as an optimization algorithm that minimizes a proposed surrogate loss. We theoretically establish convergence analyses using an unbiased gradient estimation of the surrogate loss. Furthermore, generalization analysis is provided in terms of the surrogate loss and expected loss, demonstrating the effectiveness of \themodel.

Our approach has several desirable properties. 
First, \themodel ensures unbiased gradient estimation and works with standard training pipelines without architectural changes or auxiliary approximations. It maintains an exactly controlled pruning ratio with rigorous theoretical guarantees on convergence and generalization. The surrogate loss fully captures the bias and enables principled, loss-aware pruning while sustaining strong stability. Empirically, we validate \themodel on CIFAR-10 \citep{cifar10}, CIFAR-100 \citep{cifar100}, and ImageNet-1K \citep{deng2009imagenet}. Across all benchmarks, \themodel achieves near-lossless performance at moderate pruning ratios and surpasses state-of-the-art methods. On ImageNet-1K, it retains full accuracy at 40\% pruning with the lowest total computation, leading to faster runtime. These results show that \themodel not only sustains strong performance but also delivers superior efficiency, robustness, and a simple plug-and-play design, making it a practical solution for scalable deep learning.

\section{Method}
Inspired by the iterative update process of stochastic gradient descent (SGD) \citep{kawaguchi2020ordered,amari1993backpropagation}, we propose Ordered Data Pruning (\themodel), a dynamic strategy that integrates adaptive sample selection into the SGD pipeline to reduce training cost without sacrificing accuracy. The overall framework is illustrated in Figure~\ref{fig:framework}. \themodel leverages a score‐value mechanism to rank and retain the most informative samples at each iteration. 

\begin{figure}[t]
    \centering
    \includegraphics[width=\linewidth]{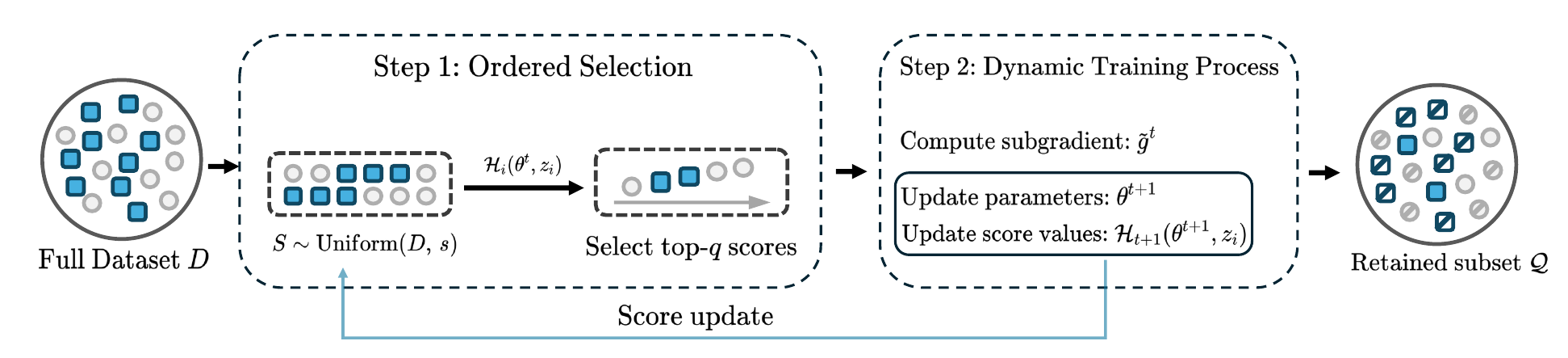}
    \caption{Illustration of the proposed \themodel framework: at each iteration, a candidate batch is sampled uniformly, the top-$q$ examples are selected by score to compute a subgradient and update model parameters, and scores are refreshed only for the retained samples.}
    \label{fig:framework}
    \vspace{-15pt}
\end{figure}

\subsection{Preliminaries}
We begin with the standard empirical risk minimization formulation over a dataset $\mathcal{D} = \{z_i\}_{i=1}^n$:
\begin{equation}
\mathcal{L}(\theta) := \frac{1}{n} \sum_{i=1}^n \mathcal{L}_i(\theta,z_i)
\end{equation}
where $\theta \in \mathbb{R}^d$ denotes the model parameters and each per-sample loss $\mathcal{L}_i(\theta, z_i)\!\colon\mathbb{R}^d\to\mathbb{R}_{\ge0}$ measures the discrepancy on example $z_i$.  Solving this ERM via (mini-batch) stochastic gradient descent is at the core of most modern machine learning tasks.

\textbf{Score Value Function}. 
To drive dynamic pruning, we associate each sample $z_i$ with a nonnegative \emph{score value} $\mathcal{H}_i(\theta) \;=\; \mathcal{H}_i(\theta, z_i)$, which quantifies its importance. In general, $\mathcal{H}_i$ can be any function of model state and data (e.g., gradient norm or influence measure), but we adopt the instantaneous loss $\mathcal{H}_i(\theta)\;=\;\mathcal{L}_i(\theta,z_i)$ as a simple, adaptive proxy: higher loss indicates greater need for retention. By updating scores only for samples that remain active, we avoid full-dataset recomputation each step. Concretely, if $\mathcal{Q}_t\subseteq\mathcal{D}$ denotes the set of retained examples at epoch $t$, then

\begin{equation}
    \mathcal{H}_{t+1}(\theta^{t+1},z_i)= 
  \begin{cases}
    \mathcal{L}_i(\theta^{t+1},z_i), 
      & i \in \mathcal{Q}_t,\\
    \mathcal{H}_t(\theta^t,z_i), 
      & i \notin \mathcal{Q}_t.
  \end{cases}
\end{equation}

This rule ensures that only the losses of selected samples are refreshed, while others retain their previous scores. Together, the ERM objective and the score value function enable dynamic data pruning: by ranking samples via $\mathcal{H}_i(\theta)$, we focus computation on the most informative subset each iteration, reducing training cost without hurting accuracy.

\subsection{The \themodel framework}
 
Building on the ERM and score‐value preliminaries, Ordered Data Pruning (\themodel) integrates dynamic sample selection into the SGD loop. At each iteration, a candidate batch is uniformly sampled, the top-$q$ samples are selected by score values, and a subgradient computed on this subset is used for parameter update. Scores are refreshed only for the retained samples, while others remain unchanged. The complete procedure is given in Algorithm~\ref{alg:orderdp}.

\themodel combines uniform sampling with score‐based ranking to preserve diversity while focusing on informative samples, and enforces an exact prune ratio of $1 - (q/s)\cdot (s/|\mathcal{D}|)$ for predictable speed-ups. Uniform sampling ensures every sample has a non-zero chance of being selected, improving robustness (see Part 4.2, \citep{shah2020choosing}) and reducing dependence on sorting. The sorting step can be reduced to $O(\log q)$ time per sample (even $O(1)$ when $q=1$), with constant memory overhead, unlike other dynamic methods such as UCB~\citep{raju2021ddp} and InfoBatch~\citep{qin2024infobatch}, which require either $O(\log n)$ time or $O(n)$ storage.

\begin{algorithm}[t]
  \caption{Dynamic Training Process with \themodel}
  \label{alg:orderdp}
  \KwIn{Initial parameters $\theta^1$, initial scores $\mathcal{H}_1(\theta^1, z_i)$ for all $i \in [n]$, learning rates $\{\eta^t\}>0$, exploration size $s$, exploitation size $q$.}
  \KwOut{Final parameters $\theta^T$.}

  \For{$t = 1,2,\ldots,T$}{
    
    \textcolor{RoyalBlue}{\tcp{Ordered Data Pruning (\themodel)}}
    Sample candidate batch $S^t \subseteq D$ uniformly at random, with $|S^t| = s$.\;
    
    Select subset $\mathcal{Q}^t \subseteq S^t$ of top-$q$ scores:\;
    
    \quad $\displaystyle
      \mathcal{Q}^t \in \arg\max_{\substack{Q \subseteq S^t \\ |Q|=q}}
        \sum_{i \in Q} \mathcal{H}_t(\theta^t, z_i).
    $\;
    
    \textcolor{RoyalBlue}{\tcp{Compute a subgradient}}
    $\tilde{g}^t \in \partial L_{\mathcal{Q}^t}(\theta^t)$, where
    $\displaystyle
      L_{\mathcal{Q}^t}(\theta^t)
      = \frac{1}{q} \sum_{i \in \mathcal{Q}^t} \mathcal{L}_i(\theta^t, z_i).
    $\;

    \textcolor{RoyalBlue}{\tcp{Update model parameters}}
    $\theta^{t+1} \gets \theta^t - \eta^t \tilde{g}^t$.\;

    \textcolor{RoyalBlue}{\tcp{Update score values}}
    $\displaystyle
      \mathcal{H}_{t+1}(\theta^{t+1}, z_i) = 
      \begin{cases}
        \mathcal{L}_i(\theta^t, z_i), & i \in \mathcal{Q}^t,\\
        \mathcal{H}_t(\theta^t, z_i), & \text{otherwise}.
      \end{cases}
    $\;
  }
\end{algorithm}

\section{Theoretical Analysis}

In this section, we show that \themodel provides unbiased gradient estimates for a surrogate loss and achieves standard convex–Lipschitz convergence rates, then establish its generalization error bound via a spectral‐risk analysis. Full proofs are provided in Appendix~\ref{app:all_proof}.
  

\subsection{Bias and Convergence Analysis}

In this subsection, we analyze the convergence of \themodel by first capturing the bias introduced by selective pruning. Specifically, we define a surrogate loss that yields unbiased gradient updates:  
\begin{equation}
\mathcal{L}_{q}(\theta) := \frac{1}{q}\sum_{j=1}^{n}\gamma_{j}\,\mathcal{L}_{(j)}(\theta), \quad \text{and where} \  \gamma_j=\sum_{l=\max\{1,s-n+j\}}^{\min\{q,j\}}\frac{\binom{j-1}{l-1}\,\binom{n-j}{s-l}}{\binom{n}{s}},  
\end{equation}
where $\mathcal{L}_{(j)}(\cdot)$ is the $j$-th rank of per-sample loss and each weight $\gamma_{j}$ depends only on $(n,s,q)$. This construction guarantees that the gradient estimator $ \tilde g^t $ produced by \themodel is unbiased with respect to $ \mathcal{L}_q $.

\begin{theorem}\label{surrogate loss}
Under the definitions above, the update $\tilde g^t$ in Algorithm~\ref{alg:orderdp} satisfies
\begin{equation}
\mathbb{E}[\tilde g^t] \notin \partial\mathcal{L}(\theta^t) \;\;\; \text{but} \;\;\;\mathbb{E}[\tilde g^t]\;\in\;\partial\mathcal{L}_q(\theta^t),
\end{equation}
i.e., it is an unbiased estimator of a (sub-)gradient of $\mathcal{L}_q$.
\end{theorem}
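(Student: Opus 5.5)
We will follow the ordered-SGD argument of \citet{kawaguchi2020ordered}. As there, we assume the scores ranked in Algorithm~\ref{alg:orderdp} are the current losses, $\mathcal{H}_t(\theta^t,z_i)=\mathcal{L}_i(\theta^t,z_i)$, with ties broken by a fixed rule. Let $(1),\dots,(n)$ be the indices sorted so that $\mathcal{L}_{(1)}(\theta^t)\ge\dots\ge\mathcal{L}_{(n)}(\theta^t)$. Because $S^t$ is drawn uniformly and $\mathcal{Q}^t$ is its top-$q$ part, we can write
\[
\tilde g^t=\frac{1}{q}\sum_{j=1}^n \mathbf{1}\{(j)\in\mathcal{Q}^t\}\,g_{(j)},
\]
where $g_{(j)}\in\partial\mathcal{L}_{(j)}(\theta^t)$ is the chosen subgradient. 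Taking expectations over $S^t$ gives $\mathbb{E}[\tilde g^t]=\frac1q\sum_j \Pr[(j)\in\mathcal{Q}^t]\,g_{(j)}$. Step one is therefore to prove $\Pr[(j)\in\mathcal{Q}^t]=\gamma_j$.

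Step one is a direct count. The sample $(j)$ lies in $\mathcal{Q}^t$ exactly when $(j)\in S^t$ and its rank inside $S^t$ is some $l\le q$. Rank $l$ means the other $s-1$ elements of $S^t$ consist of $l-1$ of the $j-1$ higher-ranked samples and $s-l$ of the $n-j$ lower-ranked ones. There are $\binom{j-1}{l-1}\binom{n-j}{s-l}$ such sets out of $\binom{n}{s}$, and $l$ ranges over $\max\{1,s-n+j\}\le l\le\min\{q,j\}$. Summing over $l$ gives exactly $\gamma_j$.

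Step two shows $\frac1q\sum_j\gamma_j g_{(j)}\in\partial\mathcal{L}_q(\theta^t)$. The weights $\gamma_j$ are nonincreasing in $j$: a higher-ranked sample is at least as likely to land in the top $q$ of its batch. Hence, for convex $\mathcal{L}_i$, $\mathcal{L}_q$ is a spectral (ordered weighted) risk, $\mathcal{L}_q(\theta)=\max_{\pi}\frac1q\sum_j\gamma_j\mathcal{L}_{\pi(j)}(\theta)$, a maximum of convex functions. The maximum is attained at the current sorting $\pi$. By the Danskin / max-rule, any subgradient of the active piece, such as $\frac1q\sum_j\gamma_j g_{(j)}$, is a subgradient of the maximum. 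In the nonconvex differentiable case with distinct losses, the ordering is locally constant, $\mathcal{L}_q$ is locally $\frac1q\sum_j\gamma_j\mathcal{L}_{(j)}$, and the claim is the ordinary gradient identity. \textbf{This step is the main obstacle}: ties and the interpretation of $\partial$ for nonconvex losses (Clarke subdifferential) need care. We would handle ties by noting that permuting tied indices leaves the value unchanged, so every tie-breaking gives an active piece.

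Step three is the negative part, $\mathbb{E}[\tilde g^t]\notin\partial\mathcal{L}(\theta^t)$ in general, which we read as ``not necessarily''. It suffices to give a counterexample in which the weights $\gamma_j/q$ differ from $1/n$. Take $n=2$, $s=2$, $q=1$: then $\gamma_1=1$ and $\gamma_2=0$, so $\mathbb{E}[\tilde g^t]=g_{(1)}$. Choose differentiable losses with distinct values and distinct gradients; then $g_{(1)}\neq\frac12(g_1+g_2)=\nabla\mathcal{L}(\theta^t)$. More generally, $\gamma_j=sq/(nq)\cdot(\ldots)$ is uniform only when $q=s$, so bias is generic whenever $q<s$.
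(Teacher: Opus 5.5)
Your proof is correct. Its core, the direct count giving $\Pr[(j)\in\mathcal{Q}^t]=\gamma_j$, is the paper's argument; the paper only factors it as $\Pr[(j)\in S^t]$ times a conditional hypergeometric sum. You differ genuinely in the last inclusion and in the negative claim.

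\textbf{The inclusion.} The paper justifies $\frac1q\sum_j\gamma_j g_{(j)}\in\partial\mathcal{L}_q(\theta^t)$ by ``additivity of sub-gradients.'' That rule covers a fixed weighted sum, but the ordering inside $\mathcal{L}_q$ changes with $\theta$. Your route supplies the missing step: show $\gamma_j$ is nonincreasing, then use the rearrangement identity $\mathcal{L}_q=\max_\pi\frac1q\sum_j\gamma_j\mathcal{L}_{\pi(j)}$ with the current sorting active. As a bonus, this proves the convexity of $\mathcal{L}_q$ that the proof of Theorem~\ref{converge} invokes without justification. The representation holds pointwise, so it also covers your tie-handling in the smooth nonconvex case, since the Clarke subdifferential of a max of $C^1$ functions contains every active gradient.

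\textbf{The negative part.} The paper asserts $\Pr[(j)\in\mathcal{Q}]\neq q/n$ for every $j$. This is false in general: $n=3$, $s=2$, $q=1$ gives $\gamma_2=1/3=q/n$. It would also not suffice on its own. Your ``not in general'' reading with an explicit instance is the right form of the claim. Your closing formula for $\gamma_j$ is garbled; the clean statement is $\gamma_1=s/n>q/n$ whenever $q<s$.

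\textbf{Two points you share with the paper.} First, your explicit assumption that the ranked scores are the current losses is genuinely needed. With the stale scores of Algorithm~\ref{alg:orderdp}, the ranking can differ from the loss ordering and the inclusion can fail. Second, $\tilde g^t$ may use per-sample subgradients that depend on $\mathcal{Q}^t$. You should note that their conditional averages stay in $\partial\mathcal{L}_i(\theta^t)$, because that set is convex.
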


Theorem \ref{surrogate loss} shows that the biased gradient estimation of \themodel w.r.t. empirical loss $\mathcal{L}$ can be interpreted as an unbiased method for minimizing the surrogate objective $\mathcal{L}_q.$ $\mathcal{L}_q$ is well-defined for any $\theta$, and \themodel enjoys three key advantages:  
\ding{172} By choosing $s$ and $q$, the prune ratio $1-\tfrac{q}{n}$ is easily adjusted;  
\ding{173} Computing $\gamma_j$ adds no per-epoch cost, unlike other dynamic methods requiring $O(n)$ time and memory for weight tables;  
\ding{174} \themodel preserves unbiased, lower-variance gradient estimates for $\mathcal{L}_q$—eliminating the need for annealing. See Appendix~\ref{app:thm1} for the complete proof.

Another view of Theorem~\ref{surrogate loss} is that the parameters $(n,s,q)$ shape the surrogate loss $\mathcal{L}_q(\theta)$ via the weights $\{\gamma_j\},$ which inherently represent the selective pruning. Inspired by the asymptotic approximation in~\citep{kawaguchi2020ordered}, we obtain:

\begin{proposition}\label{prop1}
Denote $z = j/n$ and $\gamma(z)\;=\;\sum_{l=1}^{q} z^{\,l-1}(1-z)^{\,s-l}\,\frac{s!}{(l-1)!\,(s-l)!}.$
Then, as $j,n\to\infty$ it holds that
\begin{equation}
\lim_{j,n\to\infty,\;j/n=z} n\,\gamma_j \;=\;\gamma(z).
\end{equation}
Furthermore, $1-\frac{n}{s}\,\gamma(z)$ is the cumulative distribution of $\mathrm{Beta}(z;s-q)$.
\end{proposition}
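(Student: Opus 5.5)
The plan is a term-by-term asymptotic comparison of the hypergeometric weights with binomial weights, keeping $s$ and $q$ fixed while $n,j\to\infty$ with $j/n=z\in(0,1)$. I then identify the limit with a binomial tail, which is the Beta CDF.

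\textbf{Step 1: stabilise the summation range.} Since $q\le s$ are fixed and $j=zn\to\infty$ with $z<1$, we eventually have $j\ge q$ and $j\le n-s+1$. Hence $\min\{q,j\}=q$ and $\max\{1,s-n+j\}=1$, so for all large $n$
\begin{equation*}
\gamma_j=\sum_{l=1}^{q}\frac{\binom{j-1}{l-1}\binom{n-j}{s-l}}{\binom{n}{s}} .
\end{equation*}
This is a finite sum with a fixed number $q$ of terms, so it suffices to find the limit of $n$ times each term.

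\textbf{Step 2: termwise limit.} For fixed $l$, use $\binom{m}{k}=\frac{m^k}{k!}(1+O(k^2/m))$ for fixed $k$ and $m\to\infty$. This gives
\begin{equation*}
\binom{j-1}{l-1}\sim\frac{(zn)^{l-1}}{(l-1)!},\qquad
\binom{n-j}{s-l}\sim\frac{((1-z)n)^{s-l}}{(s-l)!},\qquad
\binom{n}{s}\sim\frac{n^{s}}{s!}.
\end{equation*}
The ratio is therefore $\frac{1}{n}\,\frac{s!}{(l-1)!(s-l)!}\,z^{l-1}(1-z)^{s-l}\,(1+o(1))$. Multiplying by $n$ and summing over $l=1,\dots,q$ gives $n\gamma_j\to\gamma(z)$.

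The main technical point is that these asymptotics hold with $j-1=zn-1$ and $n-j=(1-z)n$ in place of exact multiples of $n$. This is harmless because only a fixed number of factors, each of the form $1+O(1/n)$, are involved. One could also allow $j/n\to z$ rather than $j/n=z$ exactly, and the same bounds go through.

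\textbf{Step 3: Beta identification.} Write $\frac{s!}{(l-1)!(s-l)!}=s\binom{s-1}{l-1}$. Substituting $k=l-1$ gives
\begin{equation*}
\frac{\gamma(z)}{s}=\sum_{k=0}^{q-1}\binom{s-1}{k}z^{k}(1-z)^{s-1-k}=\Pr\bigl[\mathrm{Bin}(s-1,z)\le q-1\bigr].
\end{equation*}
By the standard binomial--Beta identity, obtained by differentiating in $z$ so that the sum telescopes to $-\frac{(s-1)!}{(q-1)!(s-q-1)!}z^{q-1}(1-z)^{s-q-1}$, we get
\begin{equation*}
1-\frac{\gamma(z)}{s}=\Pr\bigl[\mathrm{Bin}(s-1,z)\ge q\bigr]=I_z(q,s-q).
\end{equation*}
This is the CDF at $z$ of $\mathrm{Beta}(q,s-q)$, the distribution the statement abbreviates as $\mathrm{Beta}(z;s-q)$.

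Because of the normalisation in the first claim, the factor multiplying $\gamma(z)$ here should be read as $1/s$, i.e.\ equivalently $1-\lim \frac{n}{s}\gamma_j$. I would state this reading explicitly. The only delicate point is Step 2, the uniform control of the binomial-coefficient asymptotics.
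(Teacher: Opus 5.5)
Your proof is correct, and its skeleton matches the paper's. Both reduce $\gamma_j$ to a fixed $q$-term sum (the paper pads out-of-range terms with zeros; you note that the range stabilises), take the limit term by term, and identify the Beta law through the telescoping derivative.

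The genuine difference is in the termwise limit. The paper first uses the exact identity $\frac{\binom{j-1}{l-1}\binom{n-j}{s-l}}{\binom{n}{s}}=\frac{1}{n}\frac{s!}{(l-1)!(s-l)!}\frac{\binom{n-s}{j-l}}{\binom{n-1}{j-1}}$, which isolates the $1/n$ scaling and the constant exactly. It then applies Stirling's formula to the two binomials on the right, whose upper and lower indices both grow, and asserts that the shifts by $s$, $l$, $1$ are negligible except in the exponents. That assertion is true only because factors such as $(1-s/n)^{n-s}\to e^{-s}$ cancel against $e^{-l}e^{-(s-l)}$, and the paper never checks this. You stay with the original form, where every binomial has a bounded lower index. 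The limit is then a ratio of finitely many linear factors, and $\binom{m}{k}\sim m^k/k!$ suffices, so your argument is more elementary and fully rigorous.

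For the Beta claim, the paper only computes $\frac{d}{dz}\gamma(z)$ and matches it with $-s$ times the $\mathrm{Beta}(q,s-q)$ density. It leaves implicit the boundary value $\gamma(0)=s$ that fixes the additive constant. Your identification $\gamma(z)/s=\Pr[\mathrm{Bin}(s-1,z)\le q-1]$ gives the CDF together with its normalisation. You should state, as the paper does, that this needs $q<s$; for $q=s$ one gets $1-\gamma(z)/s\equiv 0$.

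Your reading of the conclusion as ``$1-\gamma(z)/s$ is the CDF of $\mathrm{Beta}(q,s-q)$'' is exactly what the paper's proof establishes. So the $\frac{n}{s}$ and $\mathrm{Beta}(z;s-q)$ in the statement are typos.
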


The weight sequence $\{\gamma_j/q\}_{j=1}^n$ generated by \themodel forms a non-uniform probability distribution \citep{kawaguchi2020ordered,mehta2023stochastic,shah2020choosing}, which can be easily verified through a numerical simulation showing that $\sum_{j=1}^n \gamma_j/q = 1$ for given $(n,s,q)$. A non-trivial proof is also provided. For the structure of $\gamma_j$ itself, Fig~\ref{fig:app_proposition} shows that $\gamma_j$ monotonically decays. If we fix $(s,q)$, the cliff becomes smoother and closer to $r(z)$ as $j,n$ increase. Similar observations are also found in \citep{kawaguchi2020ordered}, but we make a more general formulation of $\gamma_j.$ 
More discussions, proof and empirical validation are deferred to Appendix~\ref{app:thm2} and Appendix~\ref{app:Validation_prop}.

Building on the unbiased gradient estimates of \themodel, we leverage the classic mini-batch SGD analysis to obtain the following guarantee.

\begin{theorem}\label{converge}
Let $(\theta^t)_{t=0}^T$ be the sequence generated by Algorithm~\ref{alg:orderdp}.  Suppose there exists a finite $\theta^*\in\arg\min_\theta\mathcal{L}_q(\theta)$, $\mathcal{L}_q(\theta^*)<\infty$.  If each $\mathcal{L}_i(\cdot)$ is convex and $G$-Lipschitz, then
\begin{equation}
\min_{0\le t\le T}\mathbb{E}\big[\mathcal{L}_q(\theta^t)-\mathcal{L}_q(\theta^*)\big]
\le
\frac{\eta_{\max}\bigl(\|\theta^1-\theta^*\|_2^2+G^2\sum_{t=1}^T(\eta^t)^2\bigr)}
{2\,\eta_{\min}\sum_{t=1}^T\eta^t}\,.
\end{equation}
\end{theorem}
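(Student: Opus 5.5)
We follow the standard projected-free subgradient method analysis, using Theorem~\ref{surrogate loss} to treat $\tilde g^t$ as an unbiased stochastic subgradient of $\mathcal{L}_q$.

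\textbf{Step 1 (convexity of $\mathcal{L}_q$).} First check that $\mathcal{L}_q$ is convex, so that a subgradient inequality is available. Write $\mathcal{L}_q(\theta)=\frac1q\sum_j\gamma_j\mathcal{L}_{(j)}(\theta)$, where $\mathcal{L}_{(1)}\ge\mathcal{L}_{(2)}\ge\cdots$ are the ordered losses. The weights $\gamma_j$ are nonnegative and nonincreasing in $j$, as noted after Proposition~\ref{prop1}. Hence, by the rearrangement inequality,
\[
\sum_j\gamma_j\mathcal{L}_{(j)}(\theta)=\max_{\pi}\sum_j\gamma_j\mathcal{L}_{\pi(j)}(\theta),
\]
a maximum over permutations $\pi$. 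This is a pointwise maximum of nonnegative combinations of convex functions, and so it is convex. The same representation shows $\mathcal{L}_q$ is $G$-Lipschitz, because $\sum_j\gamma_j/q=1$. Alternatively, I would verify monotonicity of $\gamma_j$ directly: $\gamma_j=\Pr(\text{sample of rank } j \text{ lands in the top } q \text{ of } S^t)$, and this probability decreases in $j$.

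\textbf{Step 2 (one-step recursion).} Expand the update:
\[
\|\theta^{t+1}-\theta^*\|^2=\|\theta^t-\theta^*\|^2-2\eta^t\langle\tilde g^t,\theta^t-\theta^*\rangle+(\eta^t)^2\|\tilde g^t\|^2 .
\]
The vector $\tilde g^t$ is an average of $q$ subgradients of $G$-Lipschitz functions, so $\|\tilde g^t\|\le G$. Take the conditional expectation given $\theta^t$ and the score state. By Theorem~\ref{surrogate loss}, $\mathbb{E}[\tilde g^t\mid\theta^t]\in\partial\mathcal{L}_q(\theta^t)$. Convexity then gives
\[
\langle\mathbb{E}[\tilde g^t\mid\theta^t],\theta^t-\theta^*\rangle\ge\mathcal{L}_q(\theta^t)-\mathcal{L}_q(\theta^*).
\]
Taking full expectations yields
\[
2\eta^t\,\mathbb{E}[\mathcal{L}_q(\theta^t)-\mathcal{L}_q(\theta^*)]\le\mathbb{E}\|\theta^t-\theta^*\|^2-\mathbb{E}\|\theta^{t+1}-\theta^*\|^2+(\eta^t)^2G^2 .
\]

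\textbf{Step 3 (telescoping).} Sum the recursion over $t=1,\dots,T$ and drop the final nonnegative term. Bound the left side below by $2\big(\sum_t\eta^t\big)\min_t\mathbb{E}[\cdot]$, and divide. This gives the bound with ratio $1/(2\sum\eta^t)$. The stated factor $\eta_{\max}/\eta_{\min}\ge1$ only loosens the bound, so the claimed inequality follows. (If the authors normalise the gradient step per iteration, the same factor appears naturally from bounding $\eta^t$ between $\eta_{\min}$ and $\eta_{\max}$.)

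\textbf{Main obstacle.} The delicate point is Step 2's use of Theorem~\ref{surrogate loss}. In Algorithm~\ref{alg:orderdp} the ranking uses the stale scores $\mathcal{H}_t$, not the current losses $\mathcal{L}_i(\theta^t)$. The surrogate $\mathcal{L}_q$ is therefore really defined through the score ordering. I would handle this by conditioning on the full history $\mathcal{F}_t$, which fixes both $\theta^t$ and the scores. I would then interpret Theorem~\ref{surrogate loss} as giving a subgradient of the rank-weighted loss under that ordering, and invoke it exactly as stated. I would also make sure that convexity is applied to the same function, via the max-over-permutations form from Step 1.
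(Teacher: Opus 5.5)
Your proposal is correct and is essentially the paper's own argument. Both expand $\|\theta^{t+1}-\theta^*\|_2^2$, condition on the past, use $\|\tilde g^t\|_2\le G$, combine unbiasedness from Theorem~\ref{surrogate loss} with convexity of $\mathcal{L}_q$, and telescope. Your two additions are refinements rather than a different route. The max-over-permutations argument with nonincreasing $\gamma_j$ supplies the convexity of $\mathcal{L}_q$, which the paper only asserts. The direct bound $\sum_t\eta^t a_t\ge(\sum_t\eta^t)\min_t a_t$ avoids the paper's lossy detour through $\eta_{\min}/\eta_{\max}$. On the stale-score point, only your ``invoke Theorem~\ref{surrogate loss} as stated'' reading actually closes the argument. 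A weighted loss ordered by the scores is history dependent and, by your own rearrangement inequality, lies pointwise below $\mathcal{L}_q$. Using it would therefore bound that loss at $\theta^t$ against $\mathcal{L}_q(\theta^*)$, not $\mathcal{L}_q(\theta^t)-\mathcal{L}_q(\theta^*)$.
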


This matches the standard $O(1/\sqrt{T})$ convergence rate of mini-batch SGD under the same convexity and Lipschitz assumptions, demonstrating that \themodel attains identical theoretical guarantees despite pruning.  In particular, choosing 
$\eta^t = \|\theta^1-\theta^*\|_2/\bigl(G\sqrt{T}\bigr)$ yields
the error bound
$(G\|\theta^1-\theta^*\|_2)/\sqrt{T}.$  By using the averaged iterate $\hat{\theta}^T = (1/\sum_{t=1}^T)\sum_{t=1}^T\eta^t\theta^t,$
the dependence on $\eta_{\max}$ and $\eta_{\min}$ can be removed, that is $\mathbb{E}[L_q(\bar{\theta}^T) - L_q(\theta^*)]  \leq  (\|\theta^{1} - \theta^*\|_2^2+G^2\sum_{t=1}^{T}(\eta^{t})^2)/(2\sum_{t=1}^{T}\eta^{t}).$  The full proof is provided in Appendix~\ref{app:proof_converge}. Empirical evidence supporting the convergence assumptions is provided in Appendix~\ref{app:Validation_converge}.


Thus, despite pruning a large fraction of data each epoch, \themodel does not slow optimization in expectation, ensuring computational savings without loss in convergence speed.


\subsection{Generalization Analysis}

Having established convergence for the surrogate loss $\mathcal{L}_q(\theta)$, we now quantify its approximation to the expected risk $\mathcal{L}(\theta^*)=\mathbb{E}_{z\sim\mathcal{D}}[\mathcal{L}(\theta^*,z)]$. Pruning creates a non‐uniform sampling bias. Motivated by the 1‐Wasserstein distance~\citep{mehta2023stochastic}, we rewrite $\mathcal{L}_q(\theta)=\sum_{j=1}^n\frac{\gamma_j}{q}\,\mathcal{L}_{(j)}(\theta)$ and $\mathcal{L}(\theta)=\sum_{j=1}^n\frac{1}{n}\,\mathcal{L}_{(j)}(\theta)$, thereby revealing the bias from the gap between $\{\gamma_j/q\}$ and uniform weights $\{1/n\}$.
Noting that $\mathbb{E}[\mathcal{L}_q(\theta,D)]=\mathbb{E}[\sum_{j=1}^s(\hat\gamma_j/q)\,\mathcal{L}_{i_{(j)}}(\theta)]$ with $\hat\gamma_j=(n/s)\gamma_j$, we decompose the generalization gap $\mathbb{E}[\mathcal{L}_q(\theta,D)]-\mathcal{L}(\theta^*)$ into a bias term $\mathbb{E}[\mathcal{L}_q(\theta,D)]-\mathbb{E}[\mathcal{L}(\theta,D)]$ and a sampling error $\mathbb{E}[\mathcal{L}(\theta,D)]-\mathcal{L}(\theta^*)$, where the expectation is over the random minibatch $\{i_1,\dots,i_s\}$.

\begin{theorem} \label{gene}
    (Generalization error bound). Under the same assumption of Theorem \ref{converge}, the following satisfies for any $\theta^t$ in the sequence $\Theta =\{\theta\}_{t=1}^T$ generated by \themodel:
    \begin{equation*} \label{gene_ana}
        \begin{split}
        \mathcal{L}(\theta^*) -\mathbb{E}[\mathcal{L}_q(\theta^t,D)]
        \leq  \underbrace{\sqrt{2}C_s B\sqrt{\frac{n-s}{s(n-1)}}  -\mathcal{Q}_{n}(\theta^t;s,q)}_{bias \ term} +  \underbrace{\frac{\eta_{max}(\|\theta^{1} - \theta^*\|_2^2+G^2\sum_{t=1}^{T}(\eta^{t})^2)}{2\eta_{min}\sum_{t=1}^{T}\eta^{t}}}_{unbiased\ term},
        \end{split}
    \end{equation*}
    where $ C_s = \sup_{t \in (0, 1)} |s(t) - u(t)|$\footnote{$s(t)$ and $u(t)$ refer to the probability density of the spectrum $\gamma_j$ distribution and uniform distribution on $(0,1).$ Details can be found in \citep{mehta2023stochastic}.}, 
    $B=\inf_{\theta \in \Theta}\max_{i\in[1,n]}|\mathcal{L}_i(\theta,D_i)| < \infty,$ and $\mathcal{Q}_{n}(\theta;s,q) := \inf_{\theta \in \Theta}\sum_{i=1}^n (\frac{r_i(\theta,D)}{q}-\frac{1}{n})\mathcal{L}_i(\theta, D_i).$ The expectation is over the random batch sampling.
\end{theorem}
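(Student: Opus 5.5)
The plan is to split the gap along the decomposition announced just before the theorem. Write
\[
\mathcal{L}(\theta^*)-\mathbb{E}[\mathcal{L}_q(\theta^t,D)] = \bigl(\mathcal{L}(\theta^*)-\mathbb{E}[\mathcal{L}(\theta^t,D)]\bigr) + \bigl(\mathbb{E}[\mathcal{L}(\theta^t,D)]-\mathbb{E}[\mathcal{L}_q(\theta^t,D)]\bigr).
\]
The second bracket is the spectral-risk bias. The first is a sampling/optimization error, which will be controlled by Theorem~\ref{converge}. Throughout, the expectation is over the uniform draw $\{i_1,\dots,i_s\}$, so $\mathbb{E}[\mathcal{L}(\theta,D)]$ is the mean of $s$ draws without replacement from the $n$ per-sample losses.

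\textbf{Bias term.} I would use the identity $\mathbb{E}[\mathcal{L}_q(\theta,D)]=\mathbb{E}[\sum_{j=1}^s(\hat\gamma_j/q)\mathcal{L}_{i_{(j)}}(\theta)]$ with $\hat\gamma_j=(n/s)\gamma_j$. This expresses the difference of the two expectations as an expectation of a difference between two $L$-statistics on the same subsample: one with spectrum $\hat\gamma_j/q$ and one with uniform weights $1/s$.

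Following \citep{mehta2023stochastic}, I would write each $L$-statistic as an integral of the empirical quantile function against the spectral density ($s(t)$ and $u(t)$, respectively). The difference is then at most $C_s=\sup_t|s(t)-u(t)|$ times $\int_0^1|F_s^{-1}(t)-\bar\ell|\,dt$, the centred absolute deviation of the subsample losses. By Cauchy--Schwarz and Jensen, this is at most $\sqrt{\mathbb{E}[\text{sample variance}]}$ up to a factor $\sqrt 2$.

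For sampling without replacement, the variance of the subsample mean carries the finite-population correction $\frac{n-s}{s(n-1)}$. Bounding the population variance by $B^2$ then gives the first piece $\sqrt2 C_sB\sqrt{(n-s)/(s(n-1))}$.

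The remaining deterministic discrepancy, between the population-level weights $\gamma_j/q$ and $1/n$ on the full sorted losses, is exactly $\sum_i(r_i/q-1/n)\mathcal{L}_i$. Here $r_i$ denotes the weight $\gamma_j$ assigned to $i$ via its rank. Because of the sign, I would lower-bound this discrepancy by its infimum over $\Theta$, namely $\mathcal{Q}_n$, and subtract it.

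\textbf{Unbiased term.} I would compare $\mathcal{L}(\theta^*)$ with $\mathbb{E}[\mathcal{L}(\theta^t,D)]$ by passing through the surrogate. By the rewriting $\mathcal{L}_q=\sum_j(\gamma_j/q)\mathcal{L}_{(j)}$, the surrogate and the empirical risk differ exactly by the $\mathcal{Q}_n$-type term already accounted for. What remains is $\mathbb{E}[\mathcal{L}_q(\theta^t)]-\mathcal{L}_q(\theta^*)$, to which I apply the bound of Theorem~\ref{converge}, taking $\theta^t$ to be the iterate attaining the minimum (or using the averaged iterate). Convexity and $G$-Lipschitzness are inherited from that theorem's assumptions.

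\textbf{Main obstacle.} The step I expect to be hardest is making the bias bookkeeping consistent. The quantity $\mathcal{Q}_n$ must absorb precisely the full-population weight discrepancy, while the $C_s$ term absorbs only the subsampling fluctuation. I must also check the sign conventions so that the terms add rather than cancel. The constant $B$ is defined as an infimum over $\Theta$ of a maximum, yet it is used as a uniform bound, so I would first verify that the per-$\theta$ version goes through and then take the infimum at the end. If that fails, I would state the bound with $\sup$ and note the discrepancy.
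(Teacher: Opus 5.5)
Your two-bracket skeleton is reasonable, but the step that is supposed to produce the factor $\sqrt{(n-s)/(s(n-1))}$ fails. You bound the \emph{entire} bias bracket by $C_s\int_0^1|F_{n,s}^{-1}(t)-\bar\ell|\,dt$ (your $F_s^{-1}$), i.e.\ by the dispersion of the losses \emph{within} the subsample, and then invoke the finite-population correction. That correction belongs to the variance of the subsample mean, not to the sample variance. Write $X_1,\dots,X_s$ for the subsample losses and $\sigma^2$ for the full-batch variance. Then
\[
\mathbb{E}\Bigl[\frac{1}{s}\sum_{j=1}^{s}(X_j-\bar X)^2\Bigr]=\sigma^2-\mathrm{Var}(\bar X)=\sigma^2\Bigl(1-\frac{n-s}{s(n-1)}\Bigr),
\]
which tends to $\sigma^2$, not $0$, as $s\to n$. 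No adjustment of constants fixes this. At $s=n$ the bias bracket equals $-\sum_i(r_i/q-1/n)\mathcal{L}_i(\theta^t)$, which is generally nonzero for $q<s$, so no bound on the whole bracket can carry a vanishing factor. For the same reason there is nothing ``remaining'' to subtract afterwards. Your $C_s$ bound already absorbs the population discrepancy, so subtracting $\mathcal{Q}_n$ from it is not legitimate. (When ranks are by current loss, $\mathcal{Q}_n\ge 0$ by Chebyshev's sum inequality, since $\gamma_j/q-1/n$ sums to zero and decreases in rank.)

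The missing idea, which is exactly the paper's decomposition, is to add and subtract the full-batch spectral risks \emph{before} applying the sup-norm bound:
\[
\mathbb{E}\,\mathbb{L}_u[F_{n,s}]-\mathbb{E}\,\mathbb{L}_s[F_{n,s}]=\bigl(\mathbb{L}_u[F_n]-\mathbb{L}_s[F_n]\bigr)+\mathbb{E}\int_0^1\bigl(s(t)-u(t)\bigr)\bigl(F_n^{-1}(t)-F_{n,s}^{-1}(t)\bigr)\,dt .
\]
The first term is $-\sum_i(r_i/q-1/n)\mathcal{L}_i(\theta^t)\le-\mathcal{Q}_n$. The second is at most $C_s\,\mathbb{E}\|F_{n,s}^{-1}-F_n^{-1}\|_{L^1(0,1)}=C_s\,\mathbb{E}\,W_1$, the expected $1$-Wasserstein distance between the subsample and full-batch empirical distributions. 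So you must centre the subsample quantile function at $F_n^{-1}$, not at the subsample mean. Then $\mathbb{E}\,W_1=\int\mathbb{E}|F_{n,s}(x)-F_n(x)|\,dx\le\int\sqrt{F_n(x)(1-F_n(x))\,\tfrac{n-s}{s(n-1)}}\,dx$, because $F_{n,s}(x)$ is a without-replacement mean of indicators. This is where the finite-population factor legitimately enters (the lemma of \citep{mehta2023stochastic} that the paper cites). Integrating over the bounded range of the losses gives the $\sqrt2\,C_sB\sqrt{(n-s)/(s(n-1))}$ term.

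Your first bracket has a matching bookkeeping problem. Recall that $\mathbb{E}[\mathcal{L}(\theta^t,D)]=\mathcal{L}(\theta^t)$. Passing through the surrogate gives $\mathcal{L}(\theta^*)-\mathcal{L}(\theta^t)=[\mathcal{L}(\theta^*)-\mathcal{L}_q(\theta^*)]+[\mathcal{L}_q(\theta^*)-\mathcal{L}_q(\theta^t)]+[\mathcal{L}_q(\theta^t)-\mathcal{L}(\theta^t)]$.
\begin{itemize}
\item The last bracket is $+\sum_i(r_i/q-1/n)\mathcal{L}_i(\theta^t)$, which exactly cancels the term from which you extract $-\mathcal{Q}_n$. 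It is not ``already accounted for'', and it cannot be spent twice.
\item The middle bracket is $\le 0$ because $\theta^*$ minimizes $\mathcal{L}_q$; it is the negative of the gap that Theorem~\ref{converge} bounds.
\item The first bracket sits at $\theta^*$, which need not lie in $\Theta$, so $\mathcal{Q}_n$ does not control it.
\end{itemize}
Done consistently, your route yields the upper bound $[\mathcal{L}(\theta^*)-\mathcal{L}_q(\theta^*)]+C_s\,\mathbb{E}\,W_1$. This is an inequality of a different form, with neither $-\mathcal{Q}_n$ nor the optimization term.

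The paper reaches the stated form by keeping the discrepancy at $\theta^t$ as $-\mathcal{Q}_n$ and bounding $|\mathbb{E}\,\mathbb{L}_u[F_{n,s}]-\mathcal{L}(\theta^*)|$ directly by the right-hand side of Theorem~\ref{converge}. Be aware that this step is asserted rather than derived there, since that theorem controls $\mathcal{L}_q$, not $\mathcal{L}$. Your concern that $B$ is defined as an infimum but used as a uniform bound is also well founded. The Wasserstein step needs a bound on all losses at each $\theta^t$, i.e.\ a supremum over $\Theta$.
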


The bias term bounds the bias from selective pruning of \themodel; the unbiased term is the standard optimization error, which vanishes as $T\to\infty$ with suitable $\eta^t$. The dependence on $\eta_{\max}$ and $\eta_{\min}$ can be removed by using the averaged iterate; see proof in Appendix~\ref{app:thm4}. In contrast, the value $C_s$ and $\mathcal{Q}_n(\theta;s,q)$ remain finite and quantify the deviation of $\{\gamma_j\}$ from uniformity \citep{mehta2023stochastic}, as confirmed by simulations in Figure~\ref{fig:uniform_approximate}.  As $q\to s$, $\bigl(r_i(\theta,D)/q-1/n\bigr)\to0$, so $\mathcal{Q}_n(\theta;s,q)\to0$; and as $s\to n$, $\sqrt{2}C_sB\sqrt{\frac{n-s}{s(n-1)}}\to0$, implying $\mathcal{L}(\theta^*)\le\mathbb{E}[\mathcal{L}_q(\theta^t,D)]$. Thus, by minimizing $\mathcal{L}_q(\theta^t,D)$, \themodel also minimizes expected generalization error. In the special case $s=q$, it reduces to standard mini‐batch SGD ($r_i(\theta,D)/q=1/n$, $C_s=0$) and the bias vanishes. The approximation behavior characterized in Theorem~\ref{gene} is further illustrated empirically in Appendix~\ref{app:Validation_prop}.

Theorem~\ref{gene} shows that \themodel’s modifies the distribution shift as the gap between the surrogate loss $ \mathcal{L}_q$ and the original loss $\mathcal{L},$ and the gap is fully captured by the values $C_s$ and $\mathcal{Q}_{n}$ of the biased term. For a high pruning ratio, i.e., small exploitation size \( q \) or a small exploration size \( s \) since \( q \leq s \), the distribution of \( \frac{\gamma_j}{q}, j \in \{1, \dots, n\}\) exhibates a large range. This leads to a significant bias compared to the uniform distribution (which has a range of 0), a substantial discrepancy between \( C_s \) and \( Q_n(\theta; s, q) \), and consequently, a poor approximation. As the pruning ratio decreases (i.e., \( q \) approaches \( s \)), the range of the \( \gamma_j \) distribution narrows, and its shift from the uniform distribution diminishes and both \( C_s \) and \( Q_n(\theta; s, q) \) decrease, thereby improving the approximation. Specifically, when \( q = s \), OrderDP reduces to standard SGD. In this case, the bias term vanishes, yielding \( \mathcal{L}_q = \mathcal{L} \). We visualize the distribution shift in Figure \ref{fig:gamma_shape}.

In summary, Theorem~\ref{gene} demonstrates that \themodel’s generalization error comprises a vanishing optimization term and a bounded pruning bias, maintaining SGD-rate convergence while controlling dynamic pruning bias, which is consistent with the observation in Figure~\ref{fig:motivation} (a-c).

\section{Experimental Settings}
\label{sec:settings}

\subsection{Datasets and Tasks}
To comprehensively validate the effectiveness of our proposed \themodel, we conduct experiments on a range of image classification benchmarks: CIFAR‑10 and CIFAR‑100 \citep{cifar10,cifar100}, ImageNet‑1K \citep{deng2009imagenet}.

CIFAR datasets comprise $32\times32$ color images across 10 and 100 categories, respectively. Each split includes 50,000 training and 10,000 test samples, providing balanced classification evaluation. ImageNet-1K, as a 1,000-class subset of ImageNet-21k, contains 1,281,167 training images and 50,000 validation images, spanning a variety of object categories.

\subsection{Implementation Details}
In this section, we provide a succinct overview of the implementation details for our experiments, including backbone models and training details.

\textbf{Backbone models.} For classification, we train ResNet-18 and ResNet-50 \citep{he2016deep} on CIFAR-10/100 and ImageNet-1K.

\textbf{Training Details.}
For \themodel, an exploration ratio of 0.5 related to $s$ (i.e., $s/|\mathcal{D}| = 0.5$) and an exploitation ratio of 0.6 related
to $q$ (i.e., $q/s = 0.6$) are used by default when no other values are specified. All models are trained with the OneCycle scheduler, which employs cosine annealing, using SGD with a momentum of 0.9 and a weight decay of $5\times10^{-4}$. Images are augmented through normalization, random cropping, and horizontal flipping unless stated otherwise. The implementation is based on PyTorch \citep{paszke2019pytorch}. All other details are deferred to the Appendix~\ref{app:infrastructures} and ~\ref{app:setup}.

\section{Empirical Studies}\label{sec:emp_results}
\subsection{Empirical analysis on CIFAR}

For a comprehensive comparison on CIFAR-10/100, we consider two categories of DP methods as baselines: static DP and dynamic DP. From static DP, we include 15 representative methods: static random pruning, CD \citep{agarwal2020contextual}, Herding \citep{welling2009herding}, K-means \citep{sorscher2022beyond}, Least Confidence and Entropy \citep{coleman2019selection}, Forgetting \citep{toneva2018empirical}, GraNd and EL2N \citep{paul2021deep}, DeepFool \citep{ducoffe2018adversarial}, Craig \citep{mirzasoleiman2020coresets}, Glister \citep{killamsetty2021glister}, Influence \citep{koh2017understanding}, and DP \citep{yang2022dataset}. From dynamic DP, we adopt four methods: dynamic random pruning, $\epsilon$-greedy \citep{raju2021ddp}, UCB \citep{raju2021ddp}, and InfoBatch\footnote{In the original experiments of InfoBatch~\citep{qin2024infobatch}, an annealing algorithm was incorporated. To ensure fair comparison, we have removed this component from all implementations.} \citep{qin2024infobatch}, along with our proposed method \themodel, which also belongs here.

\begin{table*}[t]
    \caption{Static pruning results (accuracy, \%) on \texttt{CIFAR10} and \texttt{CIFAR100} with ResNet-18. Accuracy (\%, \(\uparrow\)). Best in \textbf{bold}. Performance gaps to full-data are in {\color{RoyalBlue}{blue}} / {\color{Salmon}{orange}}.}
\label{tab:compare_sota}
    \centering
    \footnotesize
    \setlength{\tabcolsep}{8pt}
    \begin{adjustbox}{max width=\textwidth}
    \begin{tabular}{cc|ccc|ccc}
    \toprule[1.2pt]
    \multirow{3}{*}{} & Dataset  & \multicolumn{3}{c|}{CIFAR10} & \multicolumn{3}{c}{CIFAR100}  \\
    \midrule
    & Prune Ratio \% & 30 & 50 & 70& 30 & 50 & 70 \\ \midrule
    & Static Random
    & 94.6\blue{1.0} & 93.3\blue{2.3} & 90.2\blue{5.4} & 73.8\blue{4.4} & 72.1\blue{6.1} & 69.7\blue{8.5} \\
    & CD~\citep{agarwal2020contextual}
    & 95.0\blue{0.6} & 94.3\blue{1.3} & 90.8\blue{4.8} & 74.2\blue{4.0}  & 72.3\blue{5.9} & 70.3\blue{7.9} \\
    & Herding~\citep{welling2009herding}
    & 92.2\blue{3.4} & 88.0\blue{7.6} & 80.1\blue{15.5} & 73.1\blue{5.1}  & 71.8\blue{6.4} & 69.6\blue{8.0} \\
    & K-Center~\citep{sener2018active}
    & 94.7\blue{0.9} & 93.9\blue{1.7} & 90.9\blue{4.7} & 74.1\blue{4.1}  & 72.2\blue{6.0} & 70.2\blue{8.0} \\
    & Least Confidence~\citep{coleman2019selection}
    & 95.0\blue{0.6} & 94.5\blue{1.1} & 90.3\blue{5.3} & 74.2\blue{4.0} & 72.3\blue{5.9} & 69.8\blue{8.4} \\
    & Margin~\citep{coleman2019selection}
    & 94.9\blue{0.7} & 94.3\blue{1.3} & 90.9\blue{4.7} & 74.0\blue{4.2}& 72.2\blue{6.0} & 70.2\blue{8.0} \\
    & Forgetting~\citep{toneva2018empirical}
    & 94.7\blue{0.9} & 94.1\blue{1.5} & 91.7\blue{3.9} & 75.3\blue{2.9}& 73.1\blue{5.1} & 69.9\blue{8.3} \\
    & GraNd-4~\citep{paul2021deep}
    & 95.3\blue{0.3} & 94.6\blue{1.0} & 91.2\blue{4.4} & 74.6\blue{3.6} & 71.4\blue{6.8} & 68.8\blue{9.4} \\
    & DeepFool~\citep{ducoffe2018adversarial}
    & 95.1\blue{0.5} & 94.1\blue{1.5} & 90.0\blue{5.6} & 74.2\blue{4.0} & 73.2\blue{5.0} & 69.8\blue{6.4} \\
    & Craig~\citep{mirzasoleiman2020coresets}
    & 94.8\blue{0.8} & 93.3\blue{3.3} & 88.4\blue{7.2} & 74.4\blue{3.8} & 71.9\blue{6.3} & 69.7\blue{8.5} \\
    & Glister~\citep{killamsetty2021glister}
    & 95.2\blue{0.4} & 94.0\blue{1.6} & 90.9\blue{4.7} & 74.6\blue{3.6}& 73.2\blue{5.0} & 70.4\blue{7.8} \\
    & Influence~\citep{koh2017understanding}
    & 93.1\blue{2.5} & 91.3\blue{4.3} & 88.3\blue{7.3} & 74.4\blue{3.8} & 72.0\blue{6.2} & 68.9\blue{9.5} \\
    & EL2N-2~\citep{toneva2018empirical}
    & 94.4\blue{1.2} & 93.2\blue{2.4} & 89.8\blue{5.8} & 74.1\blue{4.1} & 71.0\blue{7.2} & 68.5\blue{9.7} \\
    & EL2N-20~\citep{toneva2018empirical}
    & 95.3\blue{0.3} & 95.1\blue{0.5} & 91.9\blue{3.7} & 77.2\blue{1.0}& 72.1\blue{6.1} & - \\
    & DP~\citep{yang2023dataset}
    & 94.9\blue{0.7} & 93.8\blue{1.8} & 90.8\blue{4.8} & 77.2\blue{1.0} & 73.1\blue{5.1} & - \\
    \midrule

    & OrderDP & 
    \colorbox[HTML]{DAE8FC}{\textbf{95.6\red{\textbf{0.0}}}} & 
    \textbf{95.3\blue{\textbf{0.2}}} &  \textbf{95.0\blue{\textbf{0.6}}} & 
    \colorbox[HTML]{DAE8FC}{\textbf{78.2\red{\textbf{0.0}}}} & 
    \textbf{77.9\blue{\textbf{0.3}}} & 
    \textbf{76.7\blue{\textbf{1.5}}} \\
    \midrule
    \multicolumn{2}{c|}{Whole Dataset} & \multicolumn{3}{c|}{95.6$_{\pm0.1}$} & \multicolumn{3}{c}{78.2$_{\pm0.1}$} \\
    \bottomrule[1.2pt]
    \end{tabular}
    \end{adjustbox}
    \vspace{-5pt}
\end{table*}

\begin{table*}[t]
\caption{Dynamic pruning results (accuracy, \%) on \texttt{CIFAR10} and \texttt{CIFAR100} with ResNet-18 and ResNet-50. Accuracy (\%, \(\uparrow\)). Best in \textbf{bold}. Performance gaps to full-data are in {\color{RoyalBlue}{blue}} / {\color{Salmon}{orange}}.}
\label{tab:compare_dynamic_backbones}
\centering
\footnotesize
\setlength{\tabcolsep}{5pt}
\begin{adjustbox}{max width=\textwidth}
\begin{tabular}{c|ccc|ccc||ccc|ccc}
\toprule[1.2pt]
Dataset  & \multicolumn{6}{c||}{CIFAR10} & \multicolumn{6}{c}{CIFAR100} \\
\midrule
Backbone & \multicolumn{3}{c|}{ResNet-18} & \multicolumn{3}{c||}{ResNet-50}
         & \multicolumn{3}{c|}{ResNet-18} & \multicolumn{3}{c}{ResNet-50} \\
\midrule
Prune Ratio \% & 30 & 50 & 70 & 30 & 50 & 70
               & 30 & 50 & 70 & 30 & 50 & 70 \\
\midrule
Dynamic Random 
 & 94.8\blue{0.8} & 94.5\blue{1.1} & 93.0\blue{2.6} 
 & 95.1\blue{0.5} & 94.9\blue{0.7} & 93.6\blue{2.0} 
 & 77.3\blue{0.9} & 75.3\blue{2.9} & 72.8\blue{5.4}
 & 77.9\blue{2.7} & 76.1\blue{4.5} & 73.9\blue{6.7} \\
$\epsilon$-greedy
 & 95.2\blue{0.4} & 94.9\blue{0.7} & 94.1\blue{1.5} 
 & 95.4\blue{0.2} & 95.1\blue{0.5} & 94.3\blue{1.3} 
 & 76.4\blue{1.8} & 74.8\blue{3.4} & 72.9\blue{5.3}
 & 77.2\blue{3.4} & 76.3\blue{4.3} & 74.1\blue{6.5} \\
UCB
 & 95.3\blue{0.3} & 94.7\blue{0.9} & 93.9\blue{1.7} 
 & 95.5\blue{0.1} & 95.0\blue{0.6} & 94.2\blue{1.4} 
 & 77.3\blue{0.9} & 75.3\blue{2.9} & 73.2\blue{5.0}
 & 78.0\blue{2.6} & 76.5\blue{4.1} & 74.3\blue{6.3} \\
InfoBatch
 & 95.6\blue{0.0} & 95.0\blue{0.6} & 94.5\blue{1.1} 
 & 95.6\blue{0.0} & 95.3\blue{0.3} & 94.7\blue{0.9} 
 & 78.1\blue{0.1} & 77.7\blue{0.5} & 75.9\blue{2.3} 
 & 80.4\blue{0.2} & 78.6\blue{2.0} & 76.4\blue{4.2} \\
OrderDP 
 & \colorbox[HTML]{DAE8FC}{\textbf{95.6\red{\textbf{0.0}}}}
 & \textbf{95.3\blue{\textbf{0.2}}}
 & \textbf{95.0\blue{\textbf{0.6}}}
 & \colorbox[HTML]{DAE8FC}{\textbf{95.6\red{\textbf{0.0}}}}
 & \textbf{95.4\blue{\textbf{0.2}}}
 & \textbf{95.0\blue{\textbf{0.6}}}
 & \colorbox[HTML]{DAE8FC}{\textbf{78.2\red{\textbf{0.0}}}}
 & \textbf{77.9\blue{\textbf{0.3}}}
 & \textbf{76.7\blue{\textbf{1.5}}}
 & \colorbox[HTML]{DAE8FC}{\textbf{80.6\red{\textbf{0.0}}}}
 & \textbf{79.8\blue{\textbf{0.8}}}
 & \textbf{77.9\blue{\textbf{2.7}}} \\
\midrule
Whole Dataset 
 & \multicolumn{3}{c|}{95.6$_{\pm0.1}$} 
 & \multicolumn{3}{c||}{95.6$_{\pm0.1}$}
 & \multicolumn{3}{c|}{78.2$_{\pm0.1}$}
 & \multicolumn{3}{c}{80.6$_{\pm0.1}$} \\
\bottomrule[1.2pt]
\end{tabular}
\end{adjustbox}
\vspace{-10pt}
\end{table*}

\textbf{Performance comparison.} From Tables~\ref{tab:compare_sota} and \ref{tab:compare_dynamic_backbones}, our systematic study suggests the following trends:
\ding{172} Dynamic random pruning outperforms static random by preserving higher sample diversity, and both $\epsilon$-greedy and UCB adaptively explore sample importance, but \themodel consistently surpasses other baselines in accuracy and robustness across all prune ratios. \ding{173} At 30\% pruning, only \themodel matches full-data accuracy.
\ding{174} Under 50\% and 70\%, \themodel has the smallest accuracy drop, outperforming both static and existing dynamic methods.
\ding{175} Compared to InfoBatch, \themodel consistently yields higher accuracy as pruning becomes more aggressive.

\textbf{Efficiency comparison.}
Table~\ref{tab:efficiency_comparison} reports end-to-end training time and GPU-hours under identical settings. \themodel achieves the fastest training and lowest GPU-hours, improving upon InfoBatch without loss in accuracy. Additional CIFAR results and extended comparisons are in Appendix~\ref{sec:app_exp}.

\textbf{Extended comparison of varying pruning ratios.} To further evaluate the performance of \themodel, we compare it with InfoBatch, a state-of-the-art data pruning algorithm, across different prune ratios on the CIFAR-10 and CIFAR-100 datasets. The results are demonstrated in Figure~\ref{fig:cifar_t}.
It can be observed that \themodel not only achieves higher accuracy at every pruning ratio, but also remains comparable to other algorithms and reduces total training time in most cases. Moreover, InfoBatch cannot prune to an extreme ratio (limited by 77.9\% on CIFAR‑10 or 72.2\% on CIFAR‑100 in our setting) due to its fixed retention mechanism. \emph{\themodel supports arbitrary pruning ratios because data retention is fully specified by the the exploration size $s$ and exploitation size $q$ (see Section~\ref{sec:ablation}).}
These results confirm that \themodel’s sample‐selection strategy delivers near‐optimal efficiency and robustness, making it particularly well‐suited for resource‐constrained scenarios where preserving accuracy is paramount.

\begin{figure}[t]
    \centering
    \subfloat[Accuracy vs. Training Time on CIFAR-10]{\includegraphics[width=0.5\textwidth]{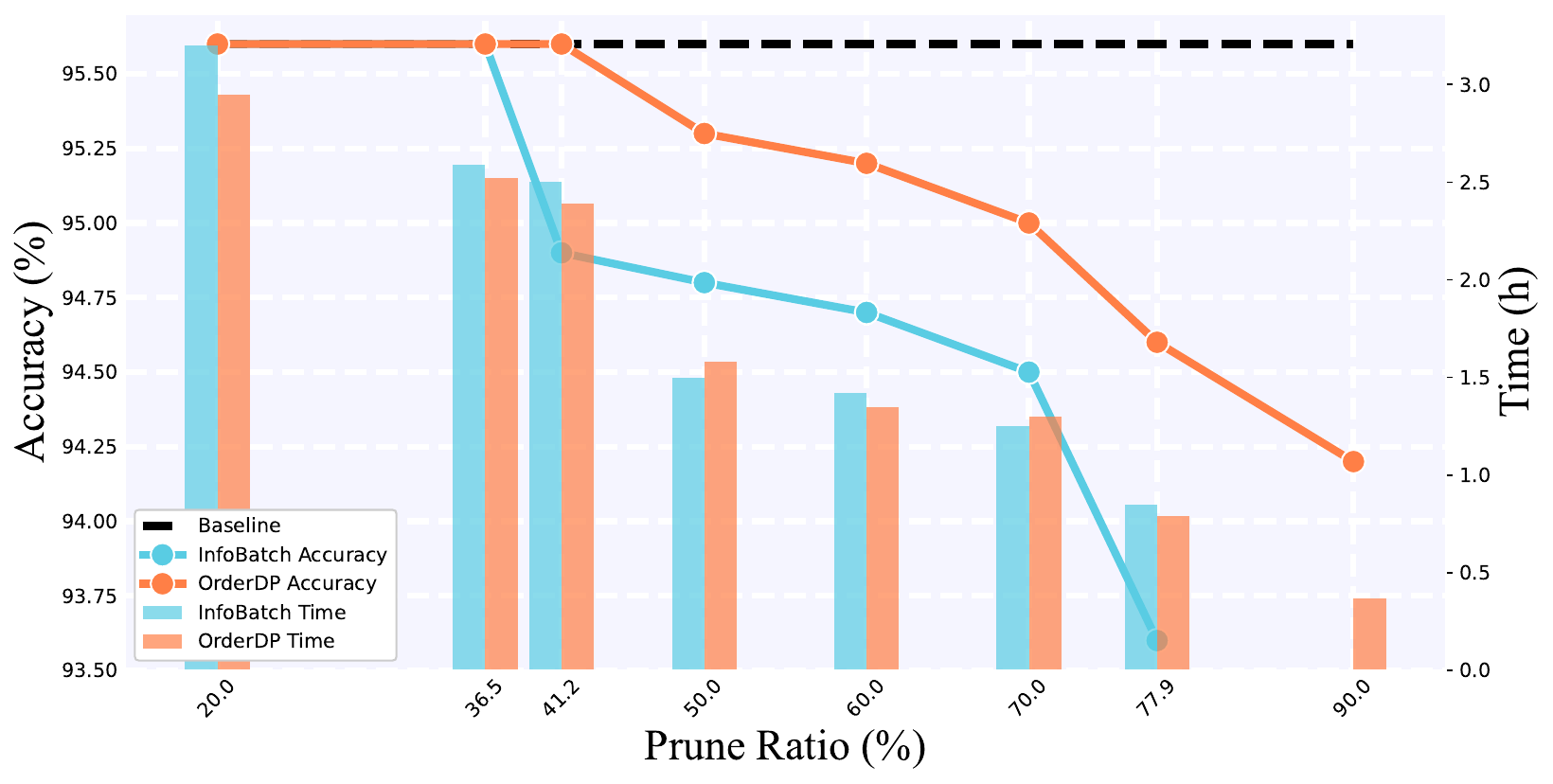} \label{fig:abl_a}}
    \subfloat[Accuracy vs. Training Time on CIFAR-100]{\includegraphics[width=0.5\textwidth]{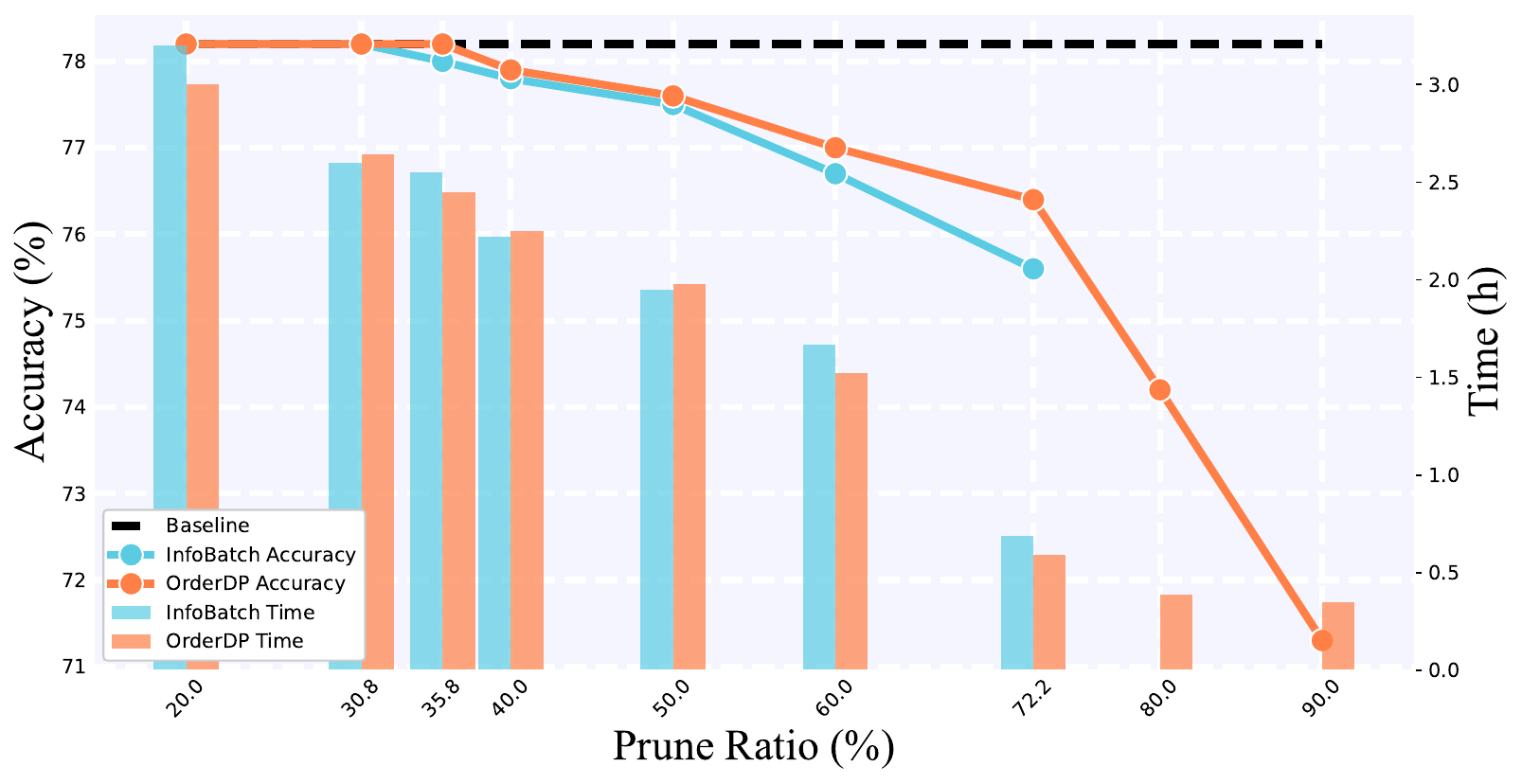} \label{fig:abl_b}}
    \caption{More accuracy and time results for different prune ratios on CIFAR-10/100 for \themodel and InfoBatch, using ResNet-18. The lossless pruning ratios are marked in the figure.}
    \label{fig:cifar_t}
    \vspace{-15pt}
\end{figure}

\subsection{Empirical analysis on ImageNet-1K}

We evaluate Dynamic Random, UCB \citep{raju2021ddp}, InfoBatch \citep{qin2024infobatch} and our \themodel on ImageNet-1K with ResNet-50 at 40\% pruning (Table~\ref{tab:efficiency_comparison}). \themodel matches/exceeds all baselines in accuracy while not significantly increasing the total GPU runtime; it retains full-data performance at 40\% pruning and incurs only a 0.4\% drop at 60\% (Table~\ref{tab:IN1K_more}). \ding{172} \emph{Efficiency:} \themodel completes training faster and uses fewer GPU-hours than all competing methods. \ding{173} \emph{Robustness:} It shows no loss at moderate prune ratios and only minimal degradation under aggressive pruning. Together, these findings confirm that \themodel achieves near-lossless accuracy with a substantial reduction in compute, making it ideal for large-scale training under tight resource budgets.

\begin{table}[t]
\parbox{.58\linewidth}{
    \centering
    \caption{
    Comparison of performance and time cost on ImageNet-1K. Results are reported with ResNet-50 under 40\% prune ratio for 90 epochs on a 2-L40-GPU server. ``Total (n*h)" is the total node hour.
    }
    \vspace{-3pt}
    \begin{adjustbox}{max width=\linewidth}
    \begin{tabular}{cccccc|c}
    \toprule
    & Random & $\epsilon$-greedy & UCB & InfoBatch & Ours & Full Data \\ 
    \midrule
    Acc (\%) & 73.4$_{\pm0.3}$ & 75.2$_{\pm0.3}$ & 75.4$_{\pm0.3}$ & 75.6$_{\pm0.2}$ & \textbf{76.4$_{\pm0.2}$} & 76.4$_{\pm0.2}$ \\
    \midrule
    Time (h) & 21.1 & 21.1 & 21.1 & 21.6 & 21.5 & 35.2 \\ 
    Total (n*h) & 42.2 & 42.2 & 42.2 & 43.2 & 43.0 & 70.4  \\
    \bottomrule
    \end{tabular}
    \end{adjustbox}
    \label{tab:efficiency_comparison}
}
\hfill
\parbox{.39\linewidth}{
\centering
\caption{Experiments on ImageNet-1K. The models here are all implemented based on ResNet-50$_{\text{PyTorch}}$.}
\label{tab:IN1K_more}
\begin{adjustbox}{max width=\textwidth}
    \begin{tabular}{c|ccc}
    \toprule
    Prune Ratio \% & 30 & 40 & 60 \\
    \midrule
    InfoBatch &
      76.4\blue{\textbf{0.0}}  &
      75.6\blue{\textbf{0.8}}  &
      74.9\blue{\textbf{1.5}}  \\
    OrderDP &
      \colorbox[HTML]{DAE8FC}{\textbf{76.4\red{\textbf{0.0}}}}  &
      \colorbox[HTML]{DAE8FC}{76.4\red{\textbf{0.0}}}  &
      76.0\blue{\textbf{0.4}}  \\
    \midrule
    \multicolumn{1}{c|}{Whole Dataset} & \multicolumn{3}{c}{76.4$_{\pm0.1}$ } \\
    \bottomrule
    \end{tabular}
    \end{adjustbox}
}

\vspace{-10pt}

\end{table}

\subsection{Ablation experiment}\label{sec:ablation}

We study how two-stage pruning decomposition, parameterized by the exploration size $s$ and exploitation size $q$, affects \themodel’s performance on CIFAR-10/100 with ResNet-18 (Figure~\ref{fig:parameter}).

\begin{figure}[h]
  \centering
  \includegraphics[width=\textwidth,height=!]{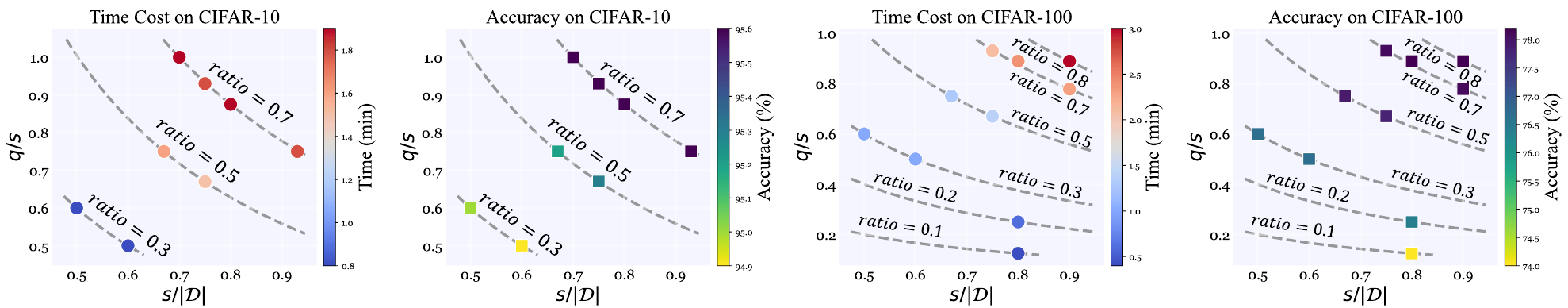}
  \caption{Performance with different ratio parameters. Here $(q/s)\cdot(s/|\mathcal{D}|)$ represents the retained data ratio, and thus the prune ratio is calculated as $1 - (q/s)\cdot(s/|\mathcal{D}|)$. Results are reported with ResNet-18. }
   \label{fig:parameter}
   \vspace{-10pt}
\end{figure}

\textbf{Fixed prune ratio} Under a fixed effective prune ratio, we vary the decomposition of the retained data portion by
adjusting the exploration ratio $s/|D|$ and the exploitation ratio $q/s$, while keeping their
product $(q/s)\cdot (s/|D|)$ unchanged.

As shown in Figure~\ref{fig:parameter}, \themodel achieves identical accuracy across all decompositions on both datasets, demonstrating its precise control over the prune ratio. The training time remains stable across different decompositions, indicating consistent computational cost when the overall prune ratio is fixed.

\textbf{Varying prune ratios.}  
As the prune ratio grows, training time drops sharply while accuracy degrades more slowly—up to about 70\% pruning, where we see over 95\% on CIFAR-10 and over 76\% on CIFAR-100 with half the compute. Beyond that, further pruning gives diminishing accuracy but continues to cut runtime. This shows \themodel’s ability to trace a smooth efficiency–performance frontier and lets practitioners pick the “sweet spot” matching their compute budget.

Our ablation shows that decoupling exploration ($s$) and exploitation ($q$) achieves exact pruning ratios without efficiency loss and yields a smooth accuracy–cost frontier, enabling straightforward budget selection. We further provide stability results under multiple runs in Appendix~\ref{app:stability}.

\subsection{Sensitivity Analysis}
\textbf{Cross‐architecture robustness evaluation.}  Table~\ref{tab:cross_arch} reports the maximum lossless prune ratios of InfoBatch and \themodel on ResNet-18/50 across CIFAR-10, CIFAR-100, and ImageNet-1K. InfoBatch usually caps in the mid-30\% range, while \themodel extends this by 4–6 points, especially on harder datasets, showing its ability to prune more aggressively without accuracy loss. 

\begin{wraptable}{r}{0.55\textwidth}
\vspace{-4mm}
\centering
\caption{Cross‐architecture robustness evaluation on ImageNet-1K.
ViT-Base (MAE) is pretrained with \themodel for 300 epochs and fine-tuned for 100 epochs.
Swin-Tiny is trained from scratch with \themodel.}
\label{tab:cross_arch_in1k}
\vspace{-5pt}
\begin{adjustbox}{width=0.95\linewidth}
\scriptsize
\begin{tabular}{lccc}
\toprule
Model & Prune Ratio & Original & OrderDP \\
\midrule
R-50$_{\text{Timm}}$    & 29.8\% & 78.4 & 78.3\blue{0.1} \\
Swin-T                  & 22.1\% & 81.5 & 81.4\blue{0.1} \\
ViT-B (MAE)             & 30.8\% & 82.8 & 82.8\red{0.0} \\
\bottomrule
\end{tabular}
\end{adjustbox}
\vspace{-3mm}
\end{wraptable}

We adopt the Timm~\citep{wightman2021timm} ImageNet training stack, which combines mixed-precision training with strong augmentation and regularization methods such as MixUp, and CutMix~\citep{zhong2017random,zhang2018mixup,yun2019cutmix}, and observe that \themodel continues to yield lossless speedups under this stronger recipe, indicating that it is compatible with existing acceleration and augmentation pipelines. Beyond CNN-based architectures, \themodel also maintains lossless accuracy at 20\%–30\% pruning on Swin-Tiny~\citep{liu2021Swin} and ViT-Base (MAE)~\citep{he2021masked} (Table~\ref{tab:cross_arch_in1k}), showing that the loss-based ordering remains stable on Vision Transformers, and that \themodel naturally transfers across heterogeneous architectures as a plug-and-play module.

\textbf{Cross‑optimizer robustness evaluation.} We test widely used optimizers—SGD~\citep{bottou1991stochastic}, AdamW~\citep{loshchilov2019decoupled}, LARS~\citep{you2017large}, and LAMB~\citep{you2019large}—on ResNet-50/CIFAR-10 (Table~\ref{tab:optimizer_robust}). InfoBatch saves 37.1\% of training cost across all optimizers, while \themodel raises savings to about 42.5\%. This consistent gain shows that \themodel’s dynamic sample selection is optimizer-agnostic: by focusing on high-loss examples, it reduces redundant computation and delivers plug-and-play efficiency without accuracy loss.

\begin{table}[!t]
\parbox{.49\linewidth}{
\centering
\renewcommand\arraystretch{1.0}
\caption{Cross-architecture robustness results of OrderDP. `Full Dataset' denotes training on the original dataset without pruning.}
\vspace{-5pt}
\centering
\begin{adjustbox}{max width=\linewidth}
\begin{tabular}{c|cc|cc|cc}
\toprule
\multirow{2}{*}{}           & \multicolumn{2}{c|}{CIFAR-10} 
&\multicolumn{2}{c|}{CIFAR-100}    & \multicolumn{2}{c}{ImageNet-1K} \\ 
& R-18 & R-50  & R-18 &R-50  &R-18 &R-50     \\ \midrule
Full Dataset          &95.6     &95.6   & 78.2   & 80.6   & 70.5  & 76.4 \\ %
\midrule
InfoBatch   &95.5    &95.6   & 78.2   & 80.6    & 70.4  & 76.4 \\ %
Saved (\%)   &  36.5   &  37.1  & 30.8  & 36.3   &  21.8 &  34.3 \\
\midrule
\textbf{OrderDP}   & 95.5  & 95.6 & 78.2 &  80.6 &   70.5  & 76.5 \\ %
Saved (\%)       & \textbf{41.2}  & \textbf{42.6}  & \textbf{35.8}   &  \textbf{41.1}   & \textbf{27.3}  &  \textbf{39.8} \\ %
                              
\bottomrule
\end{tabular}
\label{tab:cross_arch}
\end{adjustbox}
}
\hfill
\parbox{.47\linewidth}{
\centering
\caption{Comparison of accuracy (\%) and saved cost (\%) on CIFAR-10 when trained with R-50 using different optimizers.}
\vspace{-5pt}
\resizebox{\linewidth}{!}{
\begin{tabular}{ccccc}
\toprule
                & SGD      & AdamW   &   LARS    & LAMB       \\ \midrule
Full Dataset    &  95.6 & 94.3  & 95.5 &  95.0  \\
\midrule
InfoBatch    & 95.6  & 94.3  & 95.5 &  95.0    \\
Saved (\%)    &  37.1 & 37.0  & 37.1 & 37.1 \\ 
\midrule
OrderDP   & 95.6  & 94.4  & 95.5 &  95.0    \\
Saved (\%)    & \textbf{42.6}  & \textbf{42.4}  & \textbf{42.5} & \textbf{42.6} \\ 
\bottomrule
\end{tabular}
\label{tab:optimizer_robust}
}
}
\begin{tablenotes}
    \footnotesize
    \item \textbf{Note:} All the results are obtained from an 2-L40-GPU server.
\end{tablenotes}
\vspace{-5pt}
\end{table}

\section{Conclusion}
\label{sec:conclusion}

In this paper, we introduced \themodel, a novel dynamic data pruning framework that provides rigorous theoretical guarantees while achieving substantial training acceleration. Unlike prior approaches, \themodel ensures unbiased gradient estimation and offers exact control of the pruning ratio, leading to more stable and efficient data pruning. Our theoretical analysis establishes both convergence guarantees and generalization bounds, demonstrating its robustness across datasets and pruning levels. Empirically, \themodel consistently attains equal or better accuracy than state-of-the-art baselines, while reducing runtime and overall computational cost by 40–45\%. Moreover, its simpler plug-and-play design makes it easy to integrate into existing pipelines. These findings highlight the potential of our method as a scalable solution that balances efficiency, accuracy, and theoretical rigor.

\section*{Ethics Statement}
Our work focuses on improving training efficiency in deep learning through dynamic data pruning. All experiments are conducted on widely used public benchmark datasets (CIFAR-10, CIFAR-100, and ImageNet-1K), which do not involve any personally identifiable information, sensitive attributes, or human subjects. The study does not pose foreseeable risks related to privacy, fairness, or security. Moreover, no external sponsorship or conflicts of interest have influenced the design, analysis, or reporting of this work. As such, we believe our research complies with the ICLR Code of Ethics and raises no ethical concerns.

\section*{Reproducibility Statement}
We are committed to ensuring the reproducibility of our results. To this end, we provide detailed descriptions of datasets (CIFAR-10, CIFAR-100, ImageNet-1K), model architectures (ResNet-18, ResNet-50), hyperparameters, and training protocols in the main text and Appendix. For reproducibility, our implementation is based on PyTorch, with standard data augmentation (normalization, random cropping, horizontal flipping), SGD optimizer with momentum, weight decay, and OneCycle learning rate scheduling. We will submit the full source code and configuration files in the supplementary material to enable independent verification of our experiments. In addition, ablation studies and sensitivity analyses provide transparency into the robustness of our method across pruning ratios, optimizers, and architectures.

\section*{Acknowledgments}

This work was supported in part by the BNBU Research Grant (No. of UICR0100031, UICR0700108-25, and UICR0900003) at Beijing Normal-Hong Kong Baptist University, Zhuhai, PR China. This research was also supported by the National Natural Science Foundation of China (Grant No. 12401415), the Natural Science Foundation of Hunan Province (Grant No. 2025JJ60009), and the Project of Scientific Research Fund of Hunan Provincial Science and Technology Department (Grant No. 25B0148).

\bibliography{iclr2026_conference}
\bibliographystyle{iclr2026_conference}

\newpage
\appendix
\onecolumn

\section{Proof of Theoretical Analysis} \label{app:all_proof}
\subsection{Proof of Theorem \ref{surrogate loss}}
\label{app:thm1}
\begin{proof}
    We just need to show that $\tilde{g}$ is an unbiased estimator of a sub-gradient of $L_{q}(\theta)$ at $\theta^{t}$, namely $E\tilde{g} \in \partial L_{q}(\theta^{t})$. At first, it holds that
\begin{equation*}
\begin{split}
    E\tilde{g}^t &= \frac{1}{q}E\sum_{i\in Q}g_i^t  = \frac{1}{q}\sum_{i=1}^n P(i\in Q)g_i^t \\
    &{}= \frac{1}{q}\sum_{j=1}^n P((j)\in Q)g_{(j)}^t\,,
\end{split}
\end{equation*}
where $g_{i}^{t} \in \partial L_{i}(\theta^{t})$ is a sub-gradient of $L_{i}$ at $\theta^{t}$. In the above equality chain, the third equality is simply the definition of expectation, and the last equality is because $((1),(2),\ldots,(n))$ is a permutation of $(1,2,\ldots,n)$.

For any given index $j$, $P((j)\in Q) \neq \frac{q}{n}$ thus $ E\tilde{g}^t \notin \partial L(\theta^{t}).$ To analyze $P((j)\in Q),$ define $A_{j} = ((1),(2),\ldots,(j-1))$, and $A_{j}^c = ((j+1),(j+2),\ldots,(n))$ then
\begin{equation}
\begin{array}{ll}
&P((j)\in Q)= P\left((j)\in q\text{-argmax}_{i\in S} \mathcal{H}_i(\theta)\right)\\
&= P\left((j)\in S \text{ and } S \text{ contains at most } q-1 \text{ items in } A_j\right)\\
&= P\left((j)\in S\right) P\left( S \text{ contains at most } q-1 \text{ items in } A_j \mid (j)\in S\right)\\
&= P\left((j)\in S\right)\sum_{l=l_1}^{l_2} P\left( (j) \text{ appears at } l \text{ position in } S \mid (j)\in S\right),
\end{array}
\end{equation}
where $0 \leq l_1 \leq l_2 \leq s$ measures the possible positions of $(j) \in S$ These two variables vary depending on the choice of $(j).$ For examples, if $(j) = (1),$ $(j)$ should be included in $Q$ since $(1)$ would be at the top-1 position of $S.$

Notice that $S$ is randomly chosen from sample index set $(1,2,\ldots, n)$ without replacement. There are in total $\binom{n}{s}$ different sets $S$ such that $|S|=s$. Among them, there are $\binom{n-1}{s-1}$ different sets $S$ which contains the index $(j)$, thus
\begin{equation}
P\left((j)\in S\right) = \frac{\binom{n-1}{s-1}}{\binom{n}{s}}\,.
\end{equation}

Given the condition $(j)\in S$, $(j)$ appears at $l$ position means $S$ contains $l-1$ items in $A_j$ and $s-l$ items in $A_j^c$, thus we have the constraints:
$$s-l \leq n-j \quad \text{and}\quad 1 \leq l-1 \leq j-1.$$
Thus we conclude $s-n+j \leq l \leq j,$ i.e., $l_1 = \max\{1, s-n+j\}$ and $l_2 = \min\{1,j\}.$ There are $\binom{n-j}{s-l}$ such possible set $S$ for $(j)\in S$, whereby it holds that
\begin{equation}
\begin{split}
    &P\left( S \text{ contains at most } q-1 \text{ items in } A_j \mid (j)\in S\right) \\
    &= \sum_{l=l_1}^{l_2}P\left((j) \text{ appears at } l \text{ position in } S \mid (j)\in S\right)\\ 
    &= \sum_{l=\max\{1, s-n+j\}}^{\min\{q,j\}}\frac{\binom{j-1}{l-1} \binom{n-j}{s-l}}{\binom{n-1}{s-1}}\,
\end{split}
\end{equation}

Substituting Equations (7) and (8) into Equation (6), we arrive at
\begin{equation}
P((j)\in Q) = \frac{\binom{n-1}{s-1}}{\binom{n}{s}}\sum_{l=\max\{1, s-n+j\}}^{\min\{q,j\}}\frac{\binom{j-1}{l-1} \binom{n-j}{s-l}}{\binom{n-1}{s-1}} = \sum_{l=\max\{1, s-n+j\}}^{\min\{q,j\}}\frac{\binom{j-1}{l-1} \binom{n-j}{s-l}}{\binom{n}{s}} =\gamma_j.
\end{equation}

Therefore,
\begin{equation}
E\tilde{g}^{t} = \frac{1}{q}\sum_{j=1}^{n} P((j)\in Q) g_{(j)}^{t}  = \frac{1}{q}\sum_{j=1}^{n}\gamma_{j} g_{(j)}^{t} \in \partial L_{q}(\theta^{t})\,,
\end{equation}
where the last inequality is due to the additivity of sub-gradient (for both convex and weakly convex function) $\square$.
\end{proof}

\subsection{Proof of Proposition \ref{prop1}}
\label{app:thm2}
\begin{proof}
We will show that 
\begin{equation}
    \lim_{j, n \to \infty, j / n = z} \gamma_j = \frac{1}{n}\frac{s!}{(l-1)! (s - l )!}\sum_{l=1}^{q
    }\left( \frac{j}{n} \right)^{l-1} \left( 1-\frac{j}{n} \right)^{s - l}.
\end{equation}
We begin the proof by changing the variable \( z = \frac{j}{n} \).

At first, the Stirling's approximation yields that when \( n \) and \( j \) are both sufficiently large, it holds that
\begin{equation}
   \binom{n}{j} \sim \sqrt{\frac{n}{2 \pi j (n - j)}} \frac{n^{n}}{j^{j} (n - j)^{n - j}}~. 
\end{equation}

Thus,
\begin{equation}
   \lim_{j, n \to \infty, j / n = z} \frac{\binom{n - s}{j - l}}{\binom{n - 1}{j - 1}} = \frac{\frac{n^{n - s}}{j^{j - l} (n - j)^{n - j - s + l}}}{\frac{n^{n - 1}}{j^{j - 1} (n - j)^{n - j}}} = \frac{j^{l-1} (n - j)^{s - l}}{n^{s - 1}} = \left( \frac{j}{n} \right)^{l-1} \left( \frac{n - j}{n} \right)^{s - l}
\end{equation}

where the first equality utilizes Equation (10) and the fact that \( s, l, 1 \) are negligible in the limit case (except the exponent terms).

On the other hand, it holds by rearranging the factorial numbers that
\[
\frac{1}{n} \frac{\binom{n - s}{j - l}}{\binom{n - 1}{j - 1}} \frac{s!}{(l-1)! (s - l )!} = \frac{\binom{j - 1}{l-1} \binom{n - j}{s - l }}{\binom{n}{s}}~. \tag{12}
\]

Recall $\gamma_j = \sum_{l=\max\{1, s-n+j\}}^{\min\{q,j\}}\frac{\binom{j-1}{l-1} \binom{n-j}{s-l}}{\binom{n}{s}}.$ Let
\begin{equation}
    \gamma_j = \sum_{l=\max\{1, s-n+j\}}^{\min\{q,j\}
    }\frac{\binom{j-1}{l-1} \binom{n-j}{s-l}}{\binom{n}{s}} = \sum_{l=1}^{q
    }\frac{\binom{j-1}{l-1} \binom{n-j}{s-l}}{\binom{n}{s}}, 
\end{equation}
where we set the value to 0 for $l \in [1,s-n+j]$ and $[j,q]$ if $s-n+j >1$ and $j < q.$ Therefore, we conclude the following by noticing \( s > q \), 
\begin{equation}
    \begin{aligned}
         \frac{d}{dz} \gamma(z) &= \sum_{l = 2}^{q} (l-1) z^{l - 2} (1 - z)^{s - l} \frac{s!}{(l-1)! (s - l)!} - \sum_{l = 1}^{q} (s - l) z^{l-1} (1 - z)^{s - l - 1} \frac{s!}{(l-1)! (s - l)!}\\
    &= \sum_{l = 2}^{q} z^{l - 2} (1 - z)^{s - l} \frac{s!}{(l - 2)! (s - l)!} - \sum_{l = 1}^{q} z^{l-1} (1 - z)^{s - l - 1} \frac{s!}{(l-1)! (s - l - 1)!}\\
    &= \sum_{l = 1}^{q - 1} z^{l-1} (1 - z)^{s - l - 1} \frac{s!}{(l-1)! (s - l - 1)!} - \sum_{l = 1}^{q} z^{l-1} (1 - z)^{s - l - 1} \frac{s!}{(l-1)! (s - l - 1)!}\\
    &= -z^{q - 1} (1 - z)^{s - q - 1} \frac{s!}{(q-1)! (s - q - 1)!} \\
    &= -z^{q - 1} (1 - z)^{s - q - 1}\frac{(s-1)!}{(q-1)! (s - q - 1)!} s.
    \end{aligned}
\end{equation}

In other words, \( 1 - \frac{1}{s} \gamma(z) \) is the cumulative distribution function of \( \text{Beta}(q, s - q) \) when \( n \to \infty \).
\end{proof}

\subsection{Proof of Theorem \ref{converge}}\label{app:proof_converge}

Full version of Theorem~\ref{converge}: Let $(\theta^t)_{t=0}^T$ be the sequence generated by Algorithm~\ref{alg:orderdp}.  Suppose there exists a finite $\theta^*\in\arg\min_\theta\mathcal{L}_q(\theta)$, $\mathcal{L}_q(\theta^*)<\infty$.  If each $\mathcal{L}_i(\cdot)$ is convex and $G$-Lipschitz, then
\begin{equation}
\min_{0\le t\le T}\mathbb{E}\big[\mathcal{L}_q(\theta^t)-\mathcal{L}_q(\theta^*)\big]
\le
\frac{\eta_{\max}\bigl(\|\theta^1-\theta^*\|_2^2+G^2\sum_{t=1}^T(\eta^t)^2\bigr)}
{2\,\eta_{\min}\sum_{t=1}^T\eta^t}\,.
\end{equation}
Moreover, if we define the weighted average
$\bar\theta^T=\frac{1}{\sum_{t=1}^T\eta^t}\sum_{t=1}^T\eta^t\theta^t$, then
\begin{equation} \label{aver bound}
\mathbb{E}\big[\mathcal{L}_q(\bar\theta^T)-\mathcal{L}_q(\theta^*)\big]
\le
\frac{\|\theta^1-\theta^*\|_2^2+G^2\sum_{t=1}^T(\eta^t)^2}
{2\sum_{t=1}^T\eta^t}\,.
\end{equation}

\begin{proof}

Consider the one update at epoch $t,$ we have
\begin{equation}
    \|\theta^{t+1} - \theta^*\|_2^2 = \|\theta^{t} - \theta^*\|_2^2 - 2\eta^{t} \langle \tilde{g}^{t} , \theta^{t} - \theta^* \rangle + (\eta^{t})^2 \|\tilde{g}^{t}\|_2^2.
    \label{eqn:expansion}
\end{equation}

Taking the conditional expectation of $v^{t}$ given $\theta$ of~\eqref{eqn:expansion} yields
\begin{equation}
    \mathbb{E}[\|\theta^{t+1} - \theta^*\|_2^2] \leq \|\theta^{t} - \theta^*\|_2^2 - 2\eta^{t} \langle \mathbb{E}[\tilde{g}^{t}], \theta^{t} - \theta^*\rangle + (\eta^{t})^2 G^2,
    \label{eqn:expansion2}
\end{equation}
where we use $\|\tilde{g}^{t}\|_2 \leq G.$
Because we maintain an unbiased gradient estimator $\mathbb{E}[\tilde{g}^{t}] \in \partial \mathcal{L}_q(\theta^{t})$, we have that with convexity of $\mathcal{L}_q$, we have
\begin{align} \label{convexity}
    - \langle \mathbb{E}[\tilde{g}^{t}], \theta^{t} - \theta^* \rangle \leq  - (\mathcal{L}_q(\theta^{t}) - \mathcal{L}_q(\theta^{*})),
\end{align}
where the $\theta^{*}:\overset{\text{def}}{=} \arg\min_{\theta}\mathcal{L}_q(\theta).$
Substituted \eqref{convexity} into~\eqref{eqn:expansion2} gives
\begin{equation}
\begin{aligned}
    \mathbb{E}\left[\|\theta^{t+1} - \theta^*\|_2^2 \right] &\leq \|\theta^{t} - \theta^*\|_2^2 - 2\eta^{t} \left(\mathcal{L}_q(\theta^{t}) - \mathcal{L}_q(\theta^*)\right) + (\eta^{t})^2 G^2.\\
    \implies 2\eta^t(\mathcal{L}_q(\theta^{t}) - \mathcal{L}_q(\theta^*)) &\leq \left(\|\theta^{t} - \theta^*\|_2^2 -  \mathbb{E}[\|\theta^{t+1} - \theta^*\|_2^2]\right) + (\eta^{t})^2 G^2.
\end{aligned}
\end{equation}

Take the expectation over the entire sequence $\theta^1, \ldots, \theta^{t+1}$, sum over $t = 1, \ldots, T$, we have
\begin{align}
    2 \sum_{t=1}^{T}\eta^{t} \mathbb{E}[\mathcal{L}_q(\theta^{t}) - \mathcal{L}_q(\theta^*)]
    &\leq \left(\|\theta^{1} - \theta^*\|_2^2 -  \mathbb{E}[\|\theta^{T+1} - \theta^*\|_2^2]\right) +  G^2\sum_{t=1}^{T}(\eta^{t})^2.
\end{align}
It shows that 
\begin{align}
        \frac{1}{\sum_{t=1}^{T}\eta^{t}}\sum_{t=1}^{T}\eta^{t} \mathbb{E}[\mathcal{L}_q(\theta^{t}) - \mathcal{L}_q(\theta^*)]
    &\leq \frac{\|\theta^{1} - \theta^*\|_2^2+G^2\sum_{t=1}^{T}(\eta^{t})^2}{2\sum_{t=1}^{T}\eta^{t}}. 
\end{align}
With the observation that
\begin{equation}
\begin{aligned}
      &\frac{1}{\sum_{t=1}^{T}\eta^{t}}\sum_{t=1}^{T}\eta^{t} \mathbb{E}[\mathcal{L}_q(\theta^{t}) - \mathcal{L}_q(\theta^*)]\\ 
    =&\frac{1}{\frac{1}{T}\sum_{t=1}^{T}\eta^{t}} \frac{1}{T}\sum_{t=1}^{T}\eta^{t} \mathbb{E}[\mathcal{L}_q(\theta^{t}) - \mathcal{L}_q(\theta^*)] \\
    \geq{}& \frac{\min_{1\leq t \leq T}\eta^t\mathbb{E}[\mathcal{L}_q(\theta^{t}) - \mathcal{L}_q(\theta^*)]}{\max_{1\leq t \leq T}\eta^t }\\
    \geq &\frac{\eta_{min}}{\eta_{max}} \min_{1\leq t \leq T}\mathbb{E}[\mathcal{L}_q(\theta^{t}) - \mathcal{L}_q(\theta^*)],  
\end{aligned}
\end{equation}
where $\eta_{min}=\min_{1\leq t \leq T}\eta^t$ and $\eta_{max}=\max_{1\leq t \leq T}\eta^t.$
Therefore, it holds that 
\begin{equation}
\begin{aligned}
        \frac{\eta_{min}}{\eta_{max}} \min_{1\leq t \leq T}\mathbb{E}[\mathcal{L}_q(\theta^{t}) - \mathcal{L}_q(\theta^*)] &\leq         \frac{1}{\sum_{t=1}^{T}\eta^{t}}\sum_{t=1}^{T}\eta^{t} \mathbb{E}[\mathcal{L}_q(\theta^{t}) - \mathcal{L}_q(\theta^*)]\\
    \leq& \frac{\|\theta^{1} - \theta^*\|_2^2+G^2\sum_{t=1}^{T}(\eta^{t})^2}{\sum_{t=1}^{T}\eta^{t}}.
\end{aligned}
\end{equation}
Then we can derive that
\begin{equation}
\begin{split}
    \min_{1\leq t \leq T}\mathbb{E}[\mathcal{L}_q(\theta^{t}) - \mathcal{L}_q(\theta^*)] &\leq \frac{\eta_{max}(\|\theta^{1} - \theta^*\|_2^2+G^2\sum_{t=1}^{T}(\eta^{t})^2)}{2\eta_{min}\sum_{t=1}^{T}\eta^{t}} .
\end{split}
\end{equation}
\end{proof}

\subsection{Proof of Theorem \ref{gene}}
\label{app:thm4}
\begin{proof}
We begin this proof by leveraging the concepts of spectral risk measure \citep{acerbi2002coherence,mehta2023stochastic}. the surrogate loss $\mathcal{L}_q(\theta,D) =  \sum_{i=1}^{n} \frac{\gamma_{i}}{q} L_{(i)}(\theta)=\sum_{i=1}^{n}\sigma_jZ_{(i)}$ is called an $L$-risk with a spectrum $\sigma_i=\frac{\gamma_{i}}{q}$ and $Z_{(i)}=L_{(i)}(\theta)$ for $i \in [n],$ which can be regarded as a functional of the CDF known as a \emph{spectral risk measure}. 
$\{Z_i\}_{i=1}^n$ are arbitrary real-valued i.i.d. random variables drawn from CDF $F.$ For our case, these refer to data instance $D_i$ of $n$ samples drawn from distribution $\mathcal{D}$ under parameter vector $\theta.$

Let \( F_{n}(z):= \frac{1}{n}\sum_{i=1}^{n} 1_{(-\infty,z]}(Z_{i}) \) denote the (random) empirical CDF of the sample and define the empirical \emph{quantile function} (or inverse CDF) as 
\begin{equation}
F_{n}^{-1}(t) := \inf\{ z : F_{n}(z) \geq t \} \quad \text{for } t \in (0,1).
\end{equation}

The population quantile function is defined similarly as 
\begin{equation}
F^{-1}(t) := \inf\{ z : F(z) \geq t \}.
\end{equation}
The empirical quantile function can be written in terms of the order statistics as \( F_{n}^{-1}(t) = Z_{(\lceil nt \rceil)}. \) Notice in particular that when \( t \in \left( \frac{i-1}{n}, \frac{i}{n} \right) \), we have that \( F_{n}^{-1}(t) = Z_{(i)} \), where end-points are chosen to make \( F_{n}^{-1} \) left continuous.

The spectrum \( \sigma \) of an \( L \)-risk is typically defined as a discretization of a probability density \( s \) on \( (0,1) \), such that 
\begin{equation}
\sigma_{i} = \int_{(i-1)/n}^{i/n} s(t) \, \mathrm{d}t,
\end{equation}
so that it need not be redefined for every \( n \). 
Given both the construction of \( s \) and \( F_{n}^{-1} \), we can rewrite the \( L \)-risk  as
\begin{equation}
\begin{aligned}
    \mathcal{L}_q(\theta,D)=& \sum_{i=1}^{n} \sigma_{i} Z_{(i)} = \sum_{i=1}^{n} \left( \int_{(i-1)/n}^{i/n} s(t) \, \mathrm{d}t \right) Z_{(i)}\\
    =& \sum_{i=1}^{n} \left( \int_{(i-1)/n}^{i/n} s(t) \cdot Z_{(\lceil nt \rceil)} \, \mathrm{d}t \right)\\
    =& \int_{0}^{1} s(t) \cdot F_{n}^{-1}(t) \, \mathrm{d}t =: \mathbb{L}_{s} \left[ F_{n} \right],
\end{aligned}
\end{equation}

where \( \mathbb{L}_{s} \left[ G \right] := \int_{0}^{1} s(t) G^{-1}(t) \, \mathrm{d}t \) is called a spectral risk measure with spectrum \( s \) applied to CDF \( G \). 

It stands to reason that \( \mathbb{L}_{s} \left[ F_{n} \right] \) converges to \( \mathbb{L}_{s} \left[ F \right] \) in an appropriate sense. This convergence is governed by the Wasserstein distance between the empirical and population distribution, which we briefly recall here.
In this section, we control the bias term appearing in the convergence analysis. The following lemmas consider a set of real numbers, representing losses at a single $\theta \in \mathbb{R}^d$. Let $x_1, \ldots, x_n \in \mathbb{R}$ be call the \emph{full batch}, and let $X_1, \ldots X_m$ be a random sample selected uniformly \emph{without} replacement from $\{x_1, \ldots, x_n\}$, called the \emph{minibatch}. Let
\begin{align}
    F_n(x) := \frac{1}{n} \sum_{i=1}^n 1_{(-\infty , x]} \text{ and } F_{n, m}(x) := \frac{1}{m} \sum_{j=1}^m 1_{(-\infty, x)} 
\end{align}
be the empirical CDFs, and let 
\begin{align}
    F^{-1}_n(t) := \inf\{x: F_n(x) \geq t\} \text{ and } F_{n, m}(t) :=  \inf\{x: F_{n, m}(x) \geq t\}.
\end{align}
be the empirical quantile functions of the full batch and minibatch, respectively. Similarly, let
\begin{align}
    \mu_n :=\sum_{i=1}^n \delta_{x_i} \text{ and } \mu_{n, m} = \sum_{j=1}^m \delta_{X_j}
\end{align}
be the empirical measures of the full batch and minibatch, respectively, with $\delta_x$ indicating a Dirac point mass at $x$. Let $u(t) := 1_{(0, 1)}{t}$ be the uniform spectrum.

Recall the expressions of the $\mathcal{L}$-risk.  
We denote $\mathcal{L}(\theta^*) = \mathbb{E}_{D \sim \mathcal{D}}[\mathcal{L}(\theta^*,D)]$ as the optimal value.  

For the sampled distribution, we have
\begin{align}
\mathbb{E}[\mathcal{L}_{q}(\theta^t,D)] 
= \mathbb{E}\left[ \sum_{j=1}^{s} \frac{\hat{\gamma}_{j}}{q} 
   \mathcal{L}_{i_{(j)}}(\theta^t)\right]
= \mathbb{E}[\mathbb{L}_s[F_{n,s}(\cdot;\theta^t)]],
\end{align}
and for the uniform distribution, we define
\begin{align}
\mathbb{E}[\mathcal{L}_{u}(\theta^t,D)] 
= \mathbb{E}\left[ \sum_{j=1}^{s} \frac{1}{s} 
   \mathcal{L}_{i_{(j)}}(\theta^t)\right]
= \mathbb{E}[\mathbb{L}_u[F_{n,s}(\cdot;\theta^t)]]. 
\end{align}

Moreover, the full-batch loss satisfies
\begin{align}
\mathcal{L}(\theta^t,D) 
= \mathcal{L}_u(\theta^t,D) 
= \mathbb{L}_u[F_n(\cdot;\theta^t)] .
\end{align}

Here, the distributions $s$ and $u$ correspond to the sampling 
distribution of $\gamma$ in $\mathcal{L}_q(\theta^t,D)$ at step $t$ 
and the uniform distribution, respectively.  
The expectation is taken over the minibatch 
$\{i_{1}, \ldots, i_s\}$.  

Therefore, we establish the generalization error 
over the set $\Theta := \{\theta^i\}_{i=1}^T$.


\begin{equation}
\begin{aligned}
        &\mathcal{L}(\theta^*) - \mathbb{E}[\mathcal{L}_q(\theta^t,D)] \\
        \leq & \sup_{\theta \in \Theta} \mathcal{L}(\theta^*) - \mathbb{E}[\mathcal{L}_q(\theta^t,D)] \\
        =& \sup_{\theta \in \Theta} \mathcal{L}(\theta^*)- \mathbb{E}[\mathbb{L}_s[F_{n,s}]] -\mathbb{L}_s[F_{n}] + \mathbb{L}_s[F_{n}] - \mathbb{E}[\mathbb{L}_u[F_{n,s}]] + \mathbb{E}[\mathbb{L}_u[F_{n,s}]] -
        \mathbb{L}_u[F_{n}] + \mathbb{L}_u[F_{n}] \\
        = \; & \sup_{\theta \in \Theta} \mathcal{L}(\theta^*) 
        - \mathbb{E}[\mathbb{L}_u[F_{n,s}]] 
        + \mathbb{L}_u[F_{n}] - \mathbb{L}_s[F_{n}] \\
        & \; - \Big( \mathbb{E}[\mathbb{L}_s[F_{n,s}]] - \mathbb{L}_s[F_{n}] 
        - \big( \mathbb{E}[\mathbb{L}_u[F_{n,s}]] - \mathbb{L}_u[F_{n}] \big) \Big) \\
        \leq \; & \sup_{\theta \in \Theta} \mathcal{L}(\theta^*) 
        - \mathbb{E}[\mathbb{L}_u[F_{n,s}]] 
        + \sup_{\theta \in \Theta} \big( \mathbb{L}_u[F_{n}] - \mathbb{L}_s[F_{n}] \big) \\
        & \; + \sup_{\theta \in \Theta} \Big( 
        - \mathbb{E}[\mathbb{L}_s[F_{n,s}]] + \mathbb{L}_s[F_{n}] 
        + \mathbb{E}[\mathbb{L}_u[F_{n,s}]] - \mathbb{L}_u[F_{n}] \Big) \\
        \leq \; & \inf_{\theta \in \Theta} 
        \big| \mathbb{E}[\mathbb{L}_u[F_{n,s}]] - \mathcal{L}(\theta^*) \big| 
        + \sup_{\theta \in \Theta} \big( \mathbb{L}_u[F_{n}] - \mathbb{L}_s[F_{n}] \big) \\
        & \; + \sup_{\theta \in \Theta} 
        \Big| \mathbb{E}[\mathbb{L}_s[F_{n,s}]] - \mathbb{L}_s[F_{n}] 
        - \big( \mathbb{E}[\mathbb{L}_u[F_{n,s}]] - \mathbb{L}_u[F_{n}] \big) \Big| \\
        \leq \; & \frac{\eta_{\max} \Big( \|\theta^{1} - \theta^*\|_2^2 
        + G^2 \sum_{t=1}^{T} (\eta^{t})^2 \Big)}
        {2 \eta_{\min} \sum_{t=1}^{T}\eta^{t}} \\
        & \; + \sup_{\theta \in \Theta} \big( \mathbb{L}_u[F_{n}] - \mathbb{L}_s[F_{n}] \big)
        + \sup_{\theta \in \Theta} \|s-u\|_{\infty} 
          \,\mathbb{E}[\|F^{-1}_{n,m}-F^{-1}_{n}\|_1] \\
        \leq& \underbrace{\frac{\eta_{max}(\|\theta^{1} - \theta^*\|_2^2+G^2\sum_{t=1}^{T}(\eta^{t})^2)}{2\eta_{min}\sum_{t=1}^{T}\eta^{t}}}_{\text{unbiased part}} \underbrace{-\mathcal{Q}_{n}(\theta^t;s,q) + \sqrt{2}C_s B\sqrt{\frac{n-s}{s(n-1)}}}_{\text{biased part}},
    \end{aligned}
\end{equation}

where the third inequality follows the Theorem \ref{converge} and the fourth inequality follows Lemma 14 in \citep{mehta2023stochastic}. We denote $C_s = \sup_{t \in (0, 1)} |s(t) - u(t)|$, $B =  \inf_{\theta \in [1,T]}\max_{i \in [1,n]}|\mathcal{L}_i(\theta,z_i)| < \infty,$ and $\mathcal{Q}_{n}(\theta;s,q) := \inf_{\theta \in \Theta}\sum_{i=1}^n (\frac{r_i(\theta,D)}{q}-\frac{1}{n})\mathcal{L}_i(\theta, z).$

    Moreover, if we use the weighted average
$\bar\theta^T=\frac{1}{\sum_{t=1}^T\eta^t}\sum_{t=1}^T\eta^t\theta^t$ as output of \themodel, the dependence on $\eta_{\max}$ and $\eta_{\min}$ can be removed and it holds that: 

\begin{equation}
\begin{aligned}
    &\mathcal{L}(\theta^*) - \mathbb{E}[\mathcal{L}_q(\bar\theta^T,D)] \\
     =&  \mathcal{L}(\theta^*)- \mathbb{E}[\mathbb{L}_s[F_{n,s}]] -\mathbb{L}_s[F_{n}] + \mathbb{L}_s[F_{n}] - \mathbb{E}[\mathbb{L}_u[F_{n,s}]] + \mathbb{E}[\mathbb{L}_u[F_{n,s}]] -
        \mathbb{L}_u[F_{n}] + \mathbb{L}_u[F_{n}] \\
        =&  \mathcal{L}(\theta^*)-\mathbb{E}[\mathbb{L}_u[F_{n,s}]] + \mathbb{L}_u[F_{n}] - \mathbb{L}_s[F_{n}] - \left[\mathbb{E}[\mathbb{L}_s[F_{n,s}]] -\mathbb{L}_s[F_{n}] - (\mathbb{E}[\mathbb{L}_u[F_{n,s}]] -
        \mathbb{L}_u[F_{n}])\right] \\
        \leq \; & \big| \mathbb{E}[\mathbb{L}_u[F_{n,s}]] - \mathcal{L}(\theta^*) \big|
        + \big( \mathbb{L}_u[F_{n}] - \mathbb{L}_s[F_{n}] \big) \\
        & \; + \Big| \mathbb{E}[\mathbb{L}_s[F_{n,s}]] - \mathbb{L}_s[F_{n}] 
        - \big( \mathbb{E}[\mathbb{L}_u[F_{n,s}]] - \mathbb{L}_u[F_{n}] \big) \Big| \\
        \leq \; & \big| \mathbb{E}[\mathbb{L}_u[F_{n,s}]] - \mathcal{L}(\theta^*) \big|
        + \sup_{\theta \in \Theta} \big( \mathbb{L}_u[F_{n}] - \mathbb{L}_s[F_{n}] \big) \\
        & \; + \Big| \mathbb{E}[\mathbb{L}_s[F_{n,s}]] - \mathbb{L}_s[F_{n}] 
        - \big( \mathbb{E}[\mathbb{L}_u[F_{n,s}]] - \mathbb{L}_u[F_{n}] \big) \Big| \\
        \leq& \underbrace{\frac{(\|\theta^{1} - \theta^*\|_2^2+G^2\sum_{t=1}^{T}(\eta^{t})^2)}{2\sum_{t=1}^{T}\eta^{t}}}_{\text{unbiased part}} \underbrace{-\mathcal{Q}_{n}(\theta^t;s,q) + \sqrt{2}C_s B\sqrt{\frac{n-s}{s(n-1)}}}_{\text{biased part}}.
\end{aligned}
\end{equation}

where the last inequality follows from (\ref{aver bound}) and other terms remain unchanged.

\end{proof}

\section{Related Works}
\label{app:related}
\textbf{Static Data Pruning.}  
Static pruning techniques aim to pre-select a compact subset of the training data that can approximate the utility of the full dataset. A wide range of criteria have been proposed for this purpose. Diversity-based methods such as Contextual Diversity (CD)~\citep{agarwal2020contextual}, Herding~\citep{welling2009herding}, and k-Center~\citep{sener2018active} remove redundant samples by ensuring broad feature-space coverage. Difficulty-based strategies including Cal~\citep{margatina2021active} and Deepfool~\citep{ducoffe2018adversarial} prioritize hard-to-learn examples near decision boundaries. Error- and gradient-driven approaches such as GraNd and EL2N~\citep{paul2021deep} and MOSO~\citep{tan2023data} instead exploit training dynamics or loss sensitivity. In parallel, uncertainty-based sampling~\citep{coleman2019selection}, influence-function analysis~\citep{koh2017understanding}, and gradient matching approaches like GradMatch~\citep{killamsetty2021glister,killamsetty2021grad} provide alternative means of quantifying informativeness. More principled frameworks include bilevel optimization~\citep{killamsetty2021glister} and submodular subset selection~\citep{iyer2021submodular}, where algorithms such as FL and Graph Cut (GC)~\citep{iyer2021submodular} explicitly balance coverage and information gain. Early computer vision work such as \citep{huh2016makes} also emphasized the importance of dataset diversity for transferable representations. While effective in specific cases, static approaches often require costly pre-computation, and their heuristics may not generalize well across architectures or datasets, particularly at ImageNet scale.  

\textbf{Dynamic Data Pruning.}  
Dynamic methods instead make pruning decisions adaptively during training by leveraging information from the evolving model state. Early efforts such as ActiveBias~\citep{chang2017active} adjusted sampling probabilities based on prediction confidence, while forgetting-based measures~\citep{toneva2018empirical} revealed that unstable or frequently forgotten examples often provide valuable signal. Raju et al.~\citep{raju2021ddp} introduced exploration-based policies such as $\epsilon$-greedy and UCB, where uncertainty estimates guide the retention of high-value samples. Recent work has also examined improving random sampling policies themselves: 
Okanovic et al.~\citep{okanovic2024repeated} showed that repeated random sampling can 
significantly reduce time-to-accuracy, offering a complementary perspective to loss-based 
dynamic pruning approaches such as InfoBatch and \themodel.
More recently, InfoBatch~\citep{qin2024infobatch} proposed an unbiased gradient estimator, showing that loss-based pruning can accelerate training without compromising accuracy on benchmarks like CIFAR-10/100 and ImageNet-1K. Building on this line of research, Yang et al.~\citep{yang2022dataset} and Sorscher et al.~\citep{sorscher2022beyond} extended dynamic pruning principles to large-scale pretraining, while He et al.~\citep{he2024large} incorporated dynamically updated uncertainty estimates. 
In particular, Sorscher et al.\citep{sorscher2022beyond} demonstrated that the optimal choice between 
hard and easy samples can depend on dataset scale, an observation that is complementary to the 
top-$q$ strategy analyzed in this work. Despite these advances, dynamic methods still face challenges: the achievable ``lossless'' pruning ratio on new datasets is unpredictable, sorting operations can become expensive at scale, and empirical instability often emerges under aggressive pruning ratios. 
Ayed and Hayou~\citep{ayed2023data} further analyze the fundamental bias of score-based pruning
and show that reweighting can recover unbiasedness with respect to the original loss. Their
perspective is complementary to ours: while they study limitations under $L$, we provide guarantees
for a ranking-induced surrogate $L_q$ whose gap to $L$ is explicitly controlled.

\textbf{Cross-domain Data Selection and Pruning.}  
Beyond computer vision, pruning and selection strategies have been expanded to other domains such as NLP and speech. In speech recognition, unsupervised data selection has been explored through discrete speech units~\citep{lu2022unsupervised}. For large-scale NLP pretraining, several studies investigate pruning and mixture optimization to accelerate convergence. \citep{marion2023less} explored pruning strategies for pretraining corpora, while \citep{xie2023doremi} introduced DoReMi, a framework that dynamically optimizes data mixtures for faster language model pretraining. Instruction tuning has further motivated task-specific pruning, exemplified by \citep{cao2023instruction}, who proposed instruction mining to select relevant subsets for downstream tasks. These works highlight that pruning is not limited to vision but constitutes a broader principle of efficient data utilization across modalities.

\section{Experimental Infrastructures}
\label{app:infrastructures}

\textbf{Software infrastructures.} All experiments are implemented in Python 3.12.4 using PyTorch 2.3.1 with CUDA 11.8 support. Key libraries include NumPy 1.26.4, pandas 2.2.3, torchvision 0.18.1, matplotlib 3.10.1, and scikit-learn 1.6.1 for data processing and analysis. We also employ \texttt{accelerate} 1.6.0 for multi-GPU training and \texttt{tqdm} 4.67.1 for progress visualization.

\textbf{Hardware infrastructures.} We conduct all experiments on a computer server with 2 NVIDIA L40 GPUs (with 48GB memory each), a single Intel Xeon Gold 6448Y CPU (32 physical cores), and 944 GiB of system RAM.

\section{Additional Empirical Results}
\label{sec:app_exp}

\subsection{Experimental Setup Details}
\label{app:setup}
We provide software/hardware infrastructures in Appendix~\ref{app:infrastructures}; here we detail dataset-specific training setups throughout this paper.

\textbf{CIFAR-10/100:} The CIFAR-10/100 experiment with ResNet-18 can be reproduced
with SGD using a maximum learning rate of 0.2 for the OneCycle scheduler under a batch size of 128.  
For experiments with ResNet-50, SGD is used with a maximum learning rate of 0.03 and batch size of 128 for
baseline, InfoBatch, and \themodel.  

\textbf{ImageNet-1K:} The tests are implemented based on \texttt{Pytorch/examples}. The LARS
optimizer and a maximum learning rate of 6.4 / 1.98 are used for
batch size 1024 on ImageNet-1K experiments with ResNet-50/18.

\subsection{Validation of Theoretical Properties}
\label{app:Validation_prop}

Both Figure~\ref{fig:app_proposition} and Figure~\ref{fig:uniform_approximate} illustrate theoretical properties derived in Appendix~\ref{app:all_proof}. Figure~\ref{fig:app_proposition} empirically validates Proposition~\ref{prop1} by fixing $(s,q) = (100,30)$ and increasing $n$, showing how \(n\,\gamma_j\) converges to \(\gamma(z)\) as \(n\), \(s\), and \(q\) increase. Figure~\ref{fig:uniform_approximate} illustrates Theorem~\ref{gene} by comparing \(\gamma_j\) to uniform sampling for different \(q\) (fix $(n,s) = (200,100)$ and increase $q \rightarrow 100$), highlighting that the gap between the two distributions vanishes (i.e., $\mathcal{Q}_n(\theta;s,q)$ and $C_s$ approach 0) as $q$ increases.

\begin{figure}[t]
    \centering
    \begin{minipage}[t]{0.48\linewidth}
        \centering
        \includegraphics[width=\linewidth]{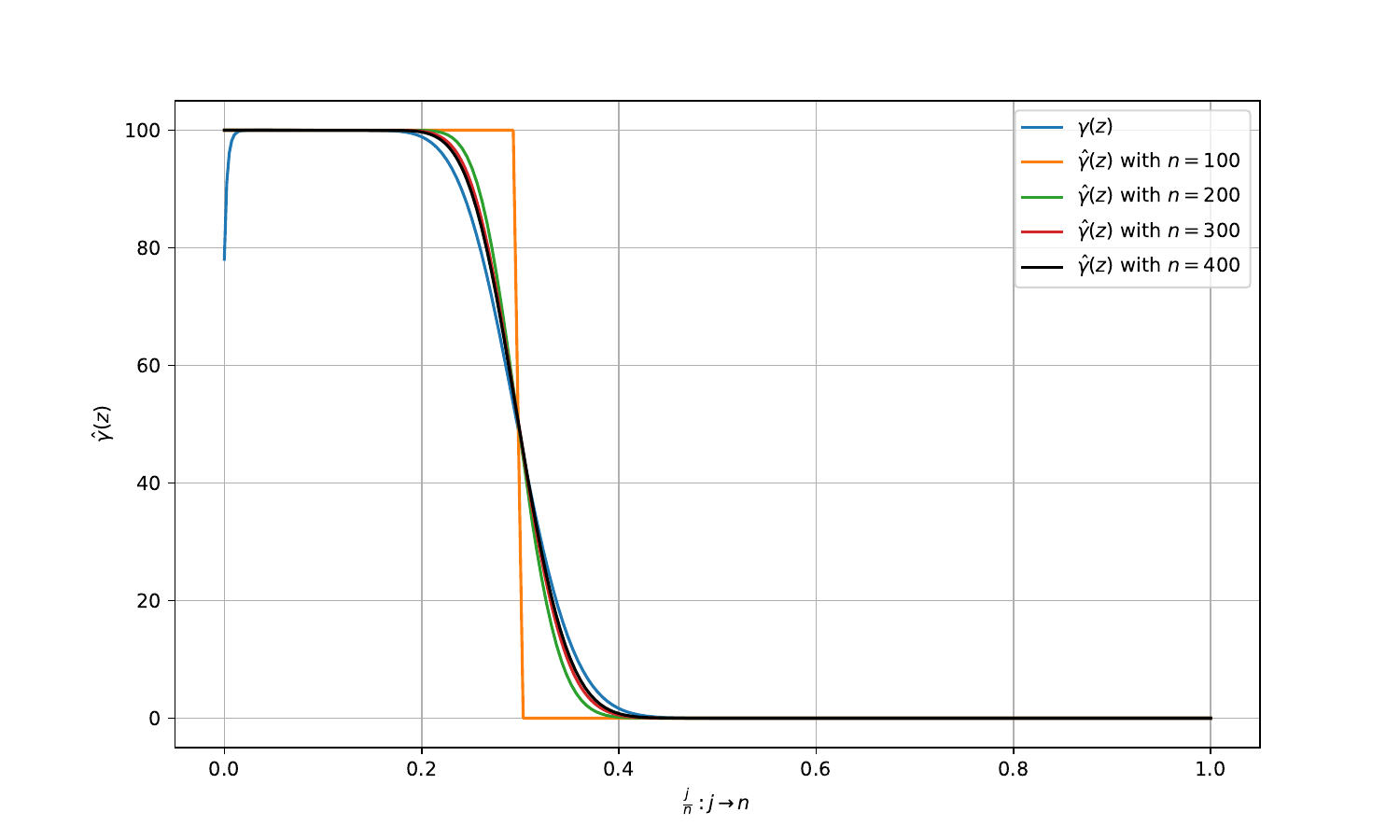}
        \caption{Empirical weight decay curves $n\,\gamma_j$ versus normalized index $j/n$, demonstrating convergence to the limiting density $\gamma(z)$ and the smoothing of the “cliff” as $n$ increases.}
        \label{fig:app_proposition}
    \end{minipage}\hfill
    \begin{minipage}[t]{0.48\linewidth}
        \centering
        \includegraphics[width=\linewidth]{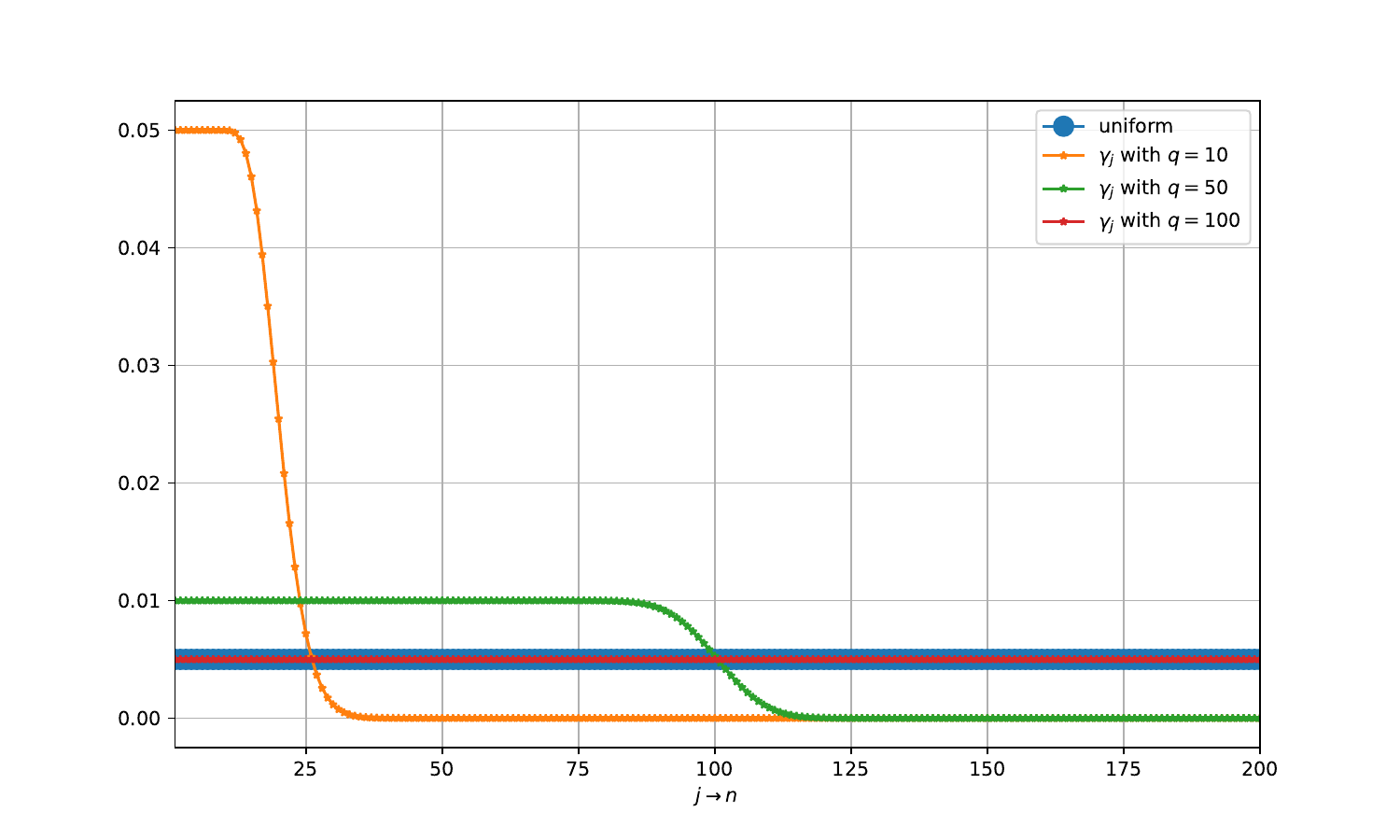}
        \caption{Comparison of the sampling weights $\gamma_j$ against uniform sampling for different exploitation sizes $q$, illustrating the deviation captured by the bias term $C_s$ in Theorem~\ref{gene}.}
        \label{fig:uniform_approximate}
    \end{minipage}
    \vspace{-5pt}
\end{figure}

In addition to the above validations, we further examine the normalization behavior of the weights
$\{\gamma_j\}$ derived in Eq.~(10). Although providing a fully symbolic proof for the combinatorial
form of $\gamma_j$ is algebraically involved, the construction of Algorithm~1 implies that exactly
$q$ samples are selected at each iteration, suggesting that
$\sum_{j=1}^n \gamma_j$ should be close to $q$, and therefore
$\sum_{j=1}^n \gamma_j / q \approx 1$.

Figure~\ref{fig:gamma_shape} plots the normalized distributions $\gamma_j/q$ for different values
of $q$ (with $(n,s)=(400,100)$). As $q$ increases, the curves gradually flatten and approach the
uniform distribution, consistent with Theorem~\ref{gene}.

Figure~\ref{fig:gamma_sum} further shows
the empirical values of $\sum_{j=1}^n \gamma_j/q$ across $q \in \{10,20,\ldots,100\}$, all of
which lie extremely close to~1 (within floating-point error). This provides strong numerical
evidence that the weights induced by Algorithm~1 are properly normalized in practice.

\begin{proof}
We also provide proof of the claim $\sum_{j=1}^n \gamma_j/q=1$  via  Mathematical Induction.

For any $n,s$, when $q = 1$, we can show exactly that
$$
\sum_j \frac{\gamma_j}{q} = \sum_j \frac{\binom{n-1}{s-1}}{\binom{n}{s}} = 1.
$$

Suppose $\sum_j \gamma_j / q = 1$ for any $q=m$ where $m \in \mathbb{N}$ and $1 \leq m \leq s-1,$ we show $\sum_j \gamma_j / q = 1$ for $q = m+1.$
The case $q=m$ can be rewritten as 
$$
\sum_j \frac{\gamma_j}{q} =\frac{1}{m}\sum_j  \sum_{l=\max\{1, s-n+j\}}^{\min\{m,j\}}\frac{\binom{j-1}{l-1} \binom{n-j}{s-l}}{\binom{n}{s}}= 1.
$$
Thus for the case $q=m+1,$ we have
\begin{align*}
    \sum_j \frac{\gamma_j}{q} =&\frac{1}{m+1}\sum_j  \sum_{l=\max\{1, s-n+j\}}^{\min\{m+1,j\}}\frac{\binom{j-1}{l-1} \binom{n-j}{s-l}}{\binom{n}{s}}\\
    =&\frac{1}{m+1}\sum_j \bigg( \sum_{l=\max\{1, s-n+j\}}^{\min\{m,j\}}\frac{\binom{j-1}{l-1} \binom{n-j}{s-l}}{\binom{n}{s}}+\frac{\binom{j-1}{m} \binom{n-j}{s-m-1}}{\binom{n}{s}}\bigg)\\
=&\frac{1}{m+1}\bigg(m+\sum_{j=m+1}^{n-s+m+1}\frac{\binom{j-1}{m} \binom{n-j}{s-m-1}}{\binom{n}{s}}\bigg)
\end{align*}

where the last equality follows that $\binom{j-1}{m} \binom{n-j}{s-m-1}>0$ for $ m+1\leq j \leq n-s+m+1$ else 0.
The remain proof is to show $\sum_{j=m+1}^{n-s+m+1}\frac{\binom{j-1}{m} \binom{n-j}{s-m-1}}{\binom{n}{s}}=1.$

Consider selecting a subset of size $s$ from the set $\{1, 2, \dots, n\}$. Arrange the elements of the subset in increasing order:
$
a_1 \geq a_2 \geq \cdots \geq a_s
$
Then $a_{m+1}$ is the $(m+1)$-th largest element. Let $j = m+1$, i.e., the $(m+1)$-th largest element is at position $j$.

To construct such a subset, the following conditions must be satisfied:
\begin{itemize}[leftmargin=*]
    \item Choose $m$ elements from the first $j-1$ elements (i.e., $a_1, \dots, a_{m}$), which can be done in $\binom{j-1}{m}$ choices.
    \item Choose $s-m-1$ elements from the remaining $n-j$ elements (i.e., $a_{m+2}, \dots, a_s$), which can be done in $\binom{n-j}{s-m-1}$ choices.
\end{itemize}
Thus, the number of subsets where the $m+1$-th largest element is exactly at position $j$ is:
$
\binom{j-1}{m} \binom{n-j}{s-m-1}
$

Summing over $j$ from $m+1$ to $n-s+m+1$ (since $j$ must be at least $m+1$ and at most $n-s+m+1$ to ensure enough elements remain), we obtain the total number of subsets of size $s$:
$
\sum_{j=m+1}^{n-s+m+1} \binom{j-1}{m} \binom{n-j}{s-m-1} = \binom{n}{s},
$

Therefore, 
the original claim holds for all $q$:
$
\sum_{j=1}^{n} \gamma_j/q =1.
$
\end{proof}

\begin{figure}[t]
    \centering
    \begin{minipage}[t]{0.48\linewidth}
        \centering
        \includegraphics[width=\linewidth]{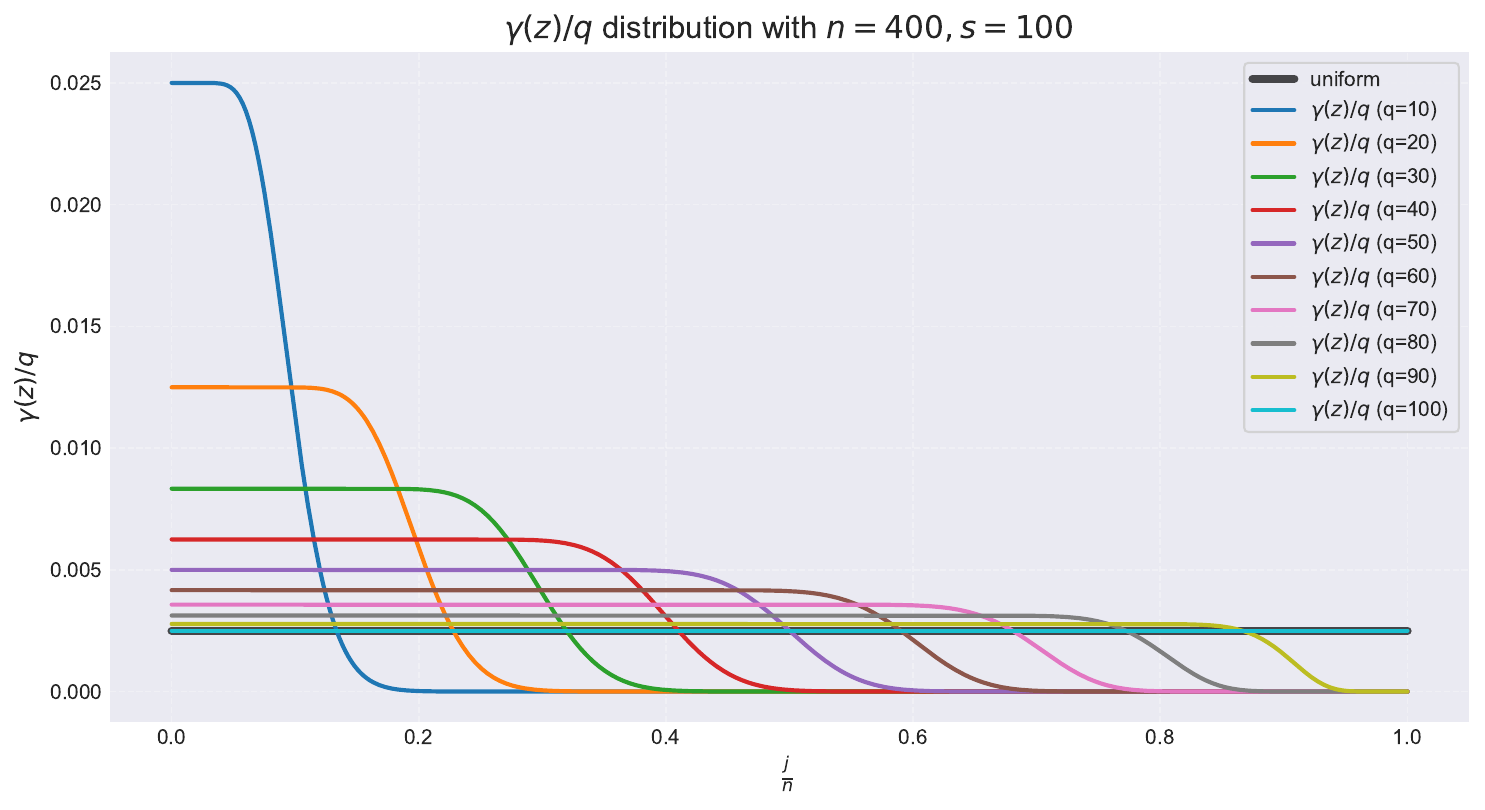}
        \caption{Normalized distributions $\gamma_j/q$ for different $q$ under $(n,s)=(400,100)$. 
        As $q$ increases, the curves flatten and approach the uniform distribution.}
        \label{fig:gamma_shape}
    \end{minipage}\hfill
    \begin{minipage}[t]{0.48\linewidth}
        \centering
        \includegraphics[width=\linewidth]{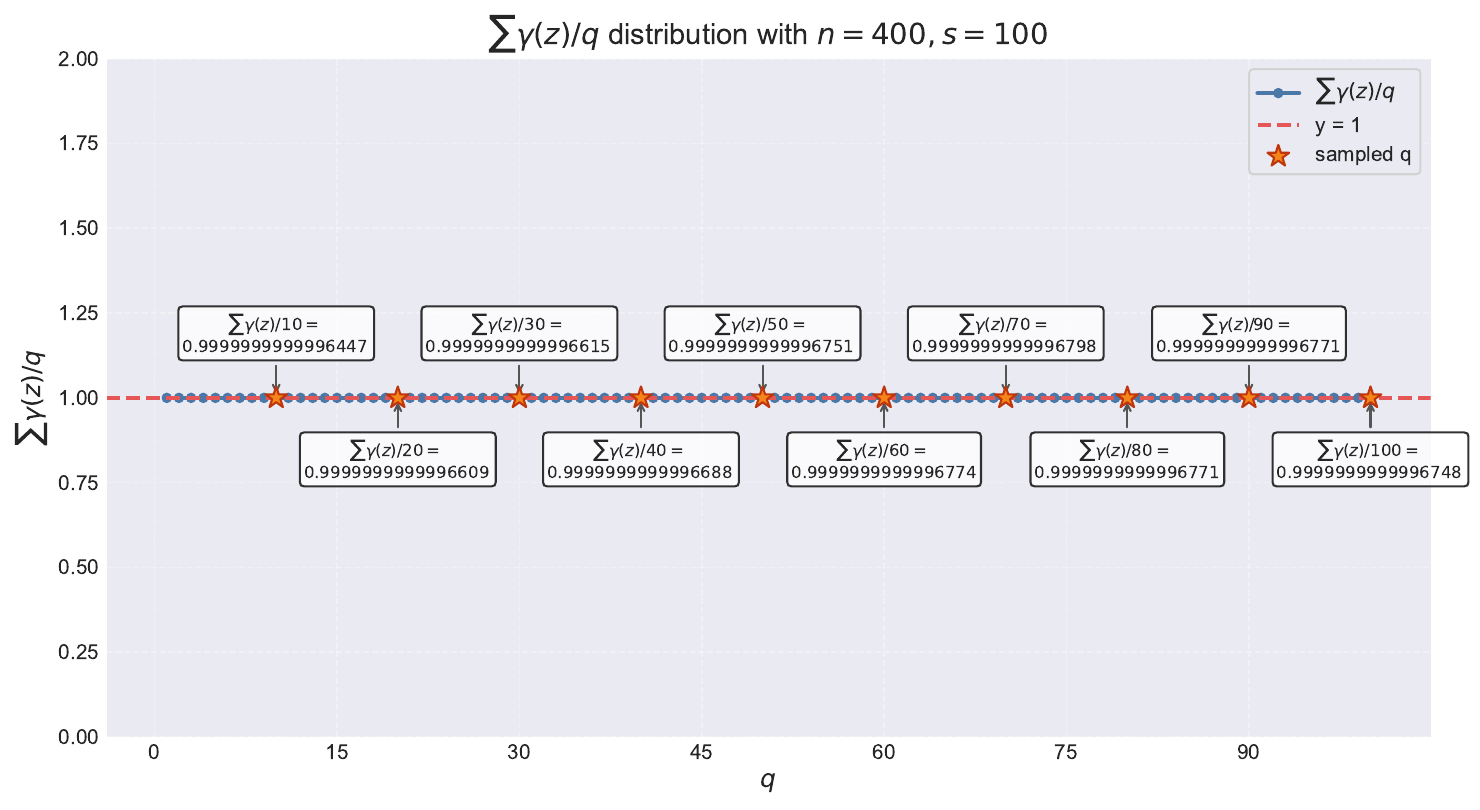}
        \caption{Empirical normalization of $\sum_{j=1}^n \gamma_j/q$.
        Across all $q \in \{10,\ldots,100\}$, the values remain extremely close to~1.}
        \label{fig:gamma_sum}
    \end{minipage}
    \vspace{-5pt}
\end{figure}

\subsection{Validation of Convergence Assumptions}
\label{app:Validation_converge}
To further support the validity of Theorem~\ref{converge}, we analyze whether the selected coreset stabilizes during training. 
Although the selection depends on sampling scores $H$, which may vary across epochs, our analysis shows that the coreset indeed becomes stable in later stages of training.

\textbf{Coreset Dynamics.} 
Our analysis does not assume a fixed coreset. Instead, OrderDP naturally determines the coreset through the pruning strategy (captured by $\gamma_j$ in Proposition~\ref{prop1}), where the sample $z_j \in \text{coreset}$ follows an approximate Beta-distribution.

\textbf{Theoretical Parallel to SGD.} 
The applicability of Theorem~\ref{converge} is analogous to SGD’s convergence guarantees: (i) SGD converges by deterministic batches per epoch (the sample $z_j \in \text{selected}$ follows uniform sampling); (ii) OrderDP achieves convergence after the coreset stabilizes (via Beta-distributed sampling).

To empirically verify this stabilization, we measure the \textit{Jaccard Similarity} between the coreset at the current epoch and that from the immediately preceding epoch, defined as
\begin{equation}
J(A,B) = \frac{|A \cap B|}{|A \cup B|}.
\end{equation}

A higher similarity indicates that the selected set of samples remains consistent across epochs. 
Table~\ref{tab:jaccard_similarity} shows that OrderDP consistently achieves higher coreset stability than InfoBatch, confirming the practical applicability of Theorem~\ref{converge}.

\begin{table}[t]
\centering
\caption{Jaccard Similarity between consecutive checkpoints on CIFAR-100 with ResNet-18. 
InfoBatch and OrderDP are trained for 200 epochs (checkpoints at 20\%, 40\%, 60\%, 80\%, and final 100\%), 
with learning rate = 0.03 and batch size = 128. 
Each setting is repeated 5 times, and mean $\pm$ std are reported. 
Higher values indicate greater stability of the coreset.}
\label{tab:jaccard_similarity}
\begin{adjustbox}{max width=\linewidth}
\begin{tabular}{cccccccc}
\toprule
Prune Ratio & Method & 0--20\% & 20--40\% & 40--60\% & 60--80\% & 80--100\% & 100\% \\
\midrule
\multirow{2}{*}{40\%} 
 & InfoBatch & \textbf{0.583$\pm$0.050} & 0.496$\pm$0.010 & 0.479$\pm$0.003 & 0.481$\pm$0.006 & 0.512$\pm$0.010 & 0.522$\pm$0.007 \\
 & OrderDP   & 0.645$\pm$0.057 & \textbf{0.692$\pm$0.023} & \textbf{0.713$\pm$0.008} & \textbf{0.743$\pm$0.023} & \textbf{0.757$\pm$0.034} & \textbf{0.767$\pm$0.008} \\
\midrule
\multirow{2}{*}{70\%} 
 & InfoBatch & \textbf{0.593$\pm$0.012} & 0.532$\pm$0.024 & 0.459$\pm$0.019 & 0.403$\pm$0.018 & 0.455$\pm$0.036 & 0.512$\pm$0.012 \\
 & OrderDP   & 0.552$\pm$0.049 & \textbf{0.592$\pm$0.016} & \textbf{0.647$\pm$0.020} & \textbf{0.661$\pm$0.018} & \textbf{0.678$\pm$0.023} & \textbf{0.704$\pm$0.090} \\
\bottomrule
\end{tabular}
\end{adjustbox}
\end{table}

\subsection{Sample Coverage under Different Pruning Ratios}
\label{app:sample_coverage}

To further examine the exploration behavior of \themodel,
we track for each training sample the number of times it is selected
into the update set throughout the entire training process.
Figures~\ref{fig:sample_usage_30}, \ref{fig:sample_usage_50}, and \ref{fig:sample_usage_99}
show the empirical distributions of sample usage counts on CIFAR-10
under prune ratios of 30\%, 50\%, and 99\%, respectively.

Across all pruning ratios, we observe that \emph{no sample has zero usage count}:
every example is selected at least once during training.
Under practical pruning settings (e.g., 30\%--50\%), most samples fall
within a reasonably concentrated range of usage counts, indicating that
\themodel\ does not permanently discard any data point but instead explores the entire dataset 
with a frequency controlled by $(s, q)$.

These empirical findings are fully consistent with our theoretical analysis of coverage
and directly support our response to reviewer questions regarding whether OrderDP 
eventually sees the entire dataset.

\begin{figure}[t]
    \centering
    \begin{minipage}[t]{0.48\linewidth}
        \centering
        \includegraphics[width=\linewidth]{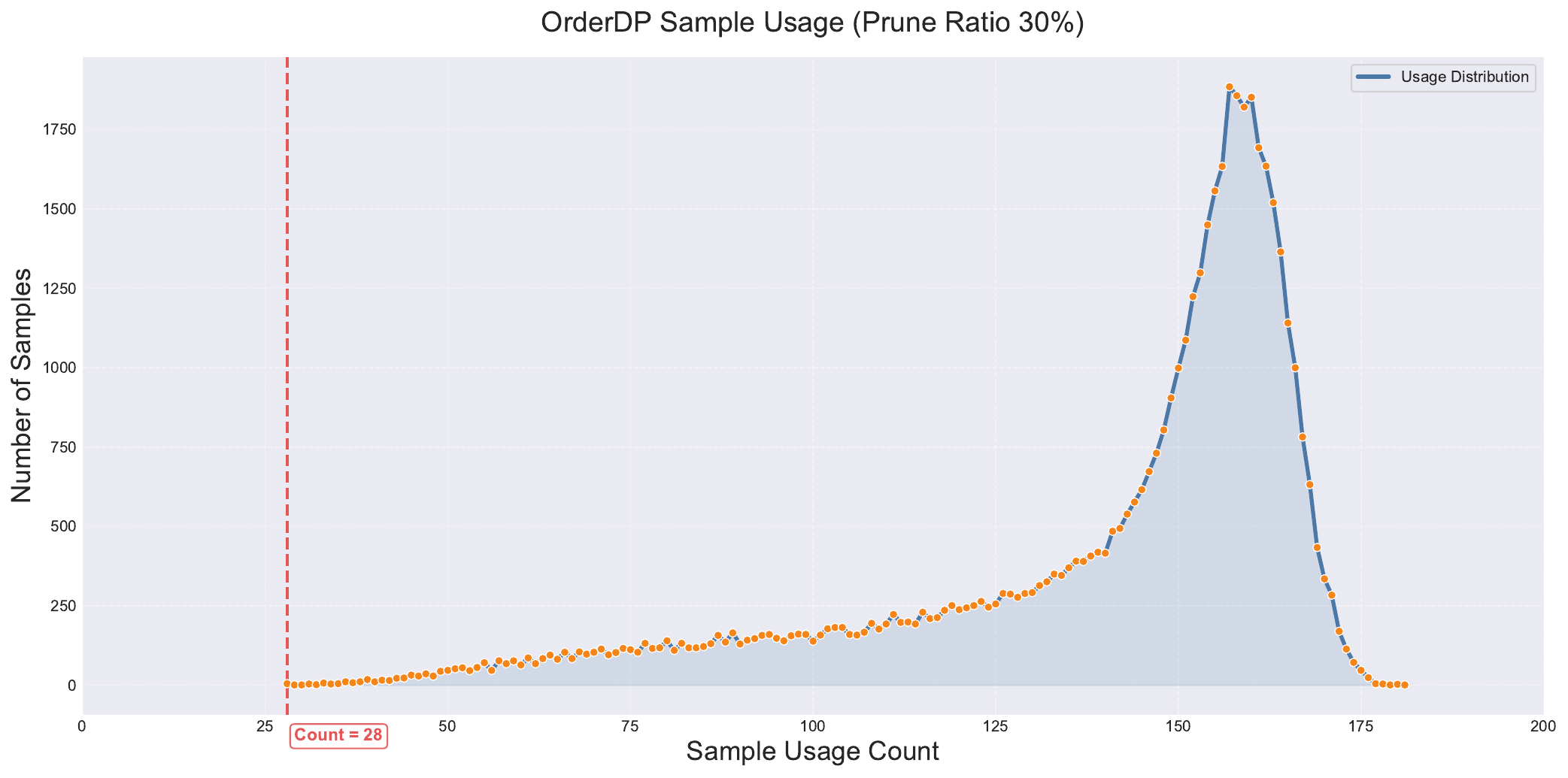}
        \caption{Sample usage count distribution (30\% prune ratio).}
        \label{fig:sample_usage_30}
    \end{minipage}\hfill
    \begin{minipage}[t]{0.48\linewidth}
        \centering
        \includegraphics[width=\linewidth]{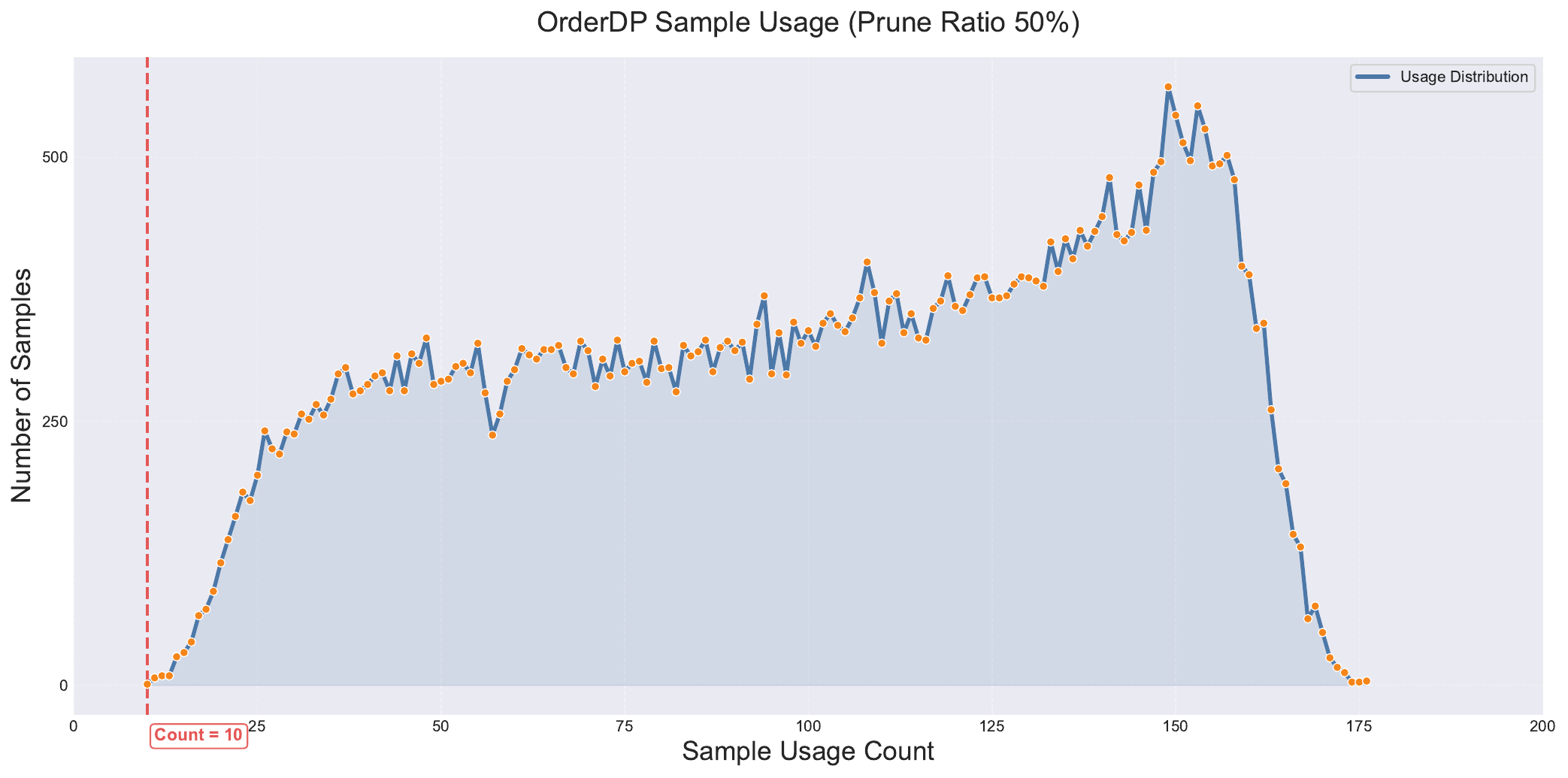}
        \caption{Sample usage count distribution (50\% prune ratio).}
        \label{fig:sample_usage_50}
    \end{minipage}
    \vspace{-10pt}
\end{figure}

\begin{figure}[t]
    \centering
    \includegraphics[width=0.55\linewidth]{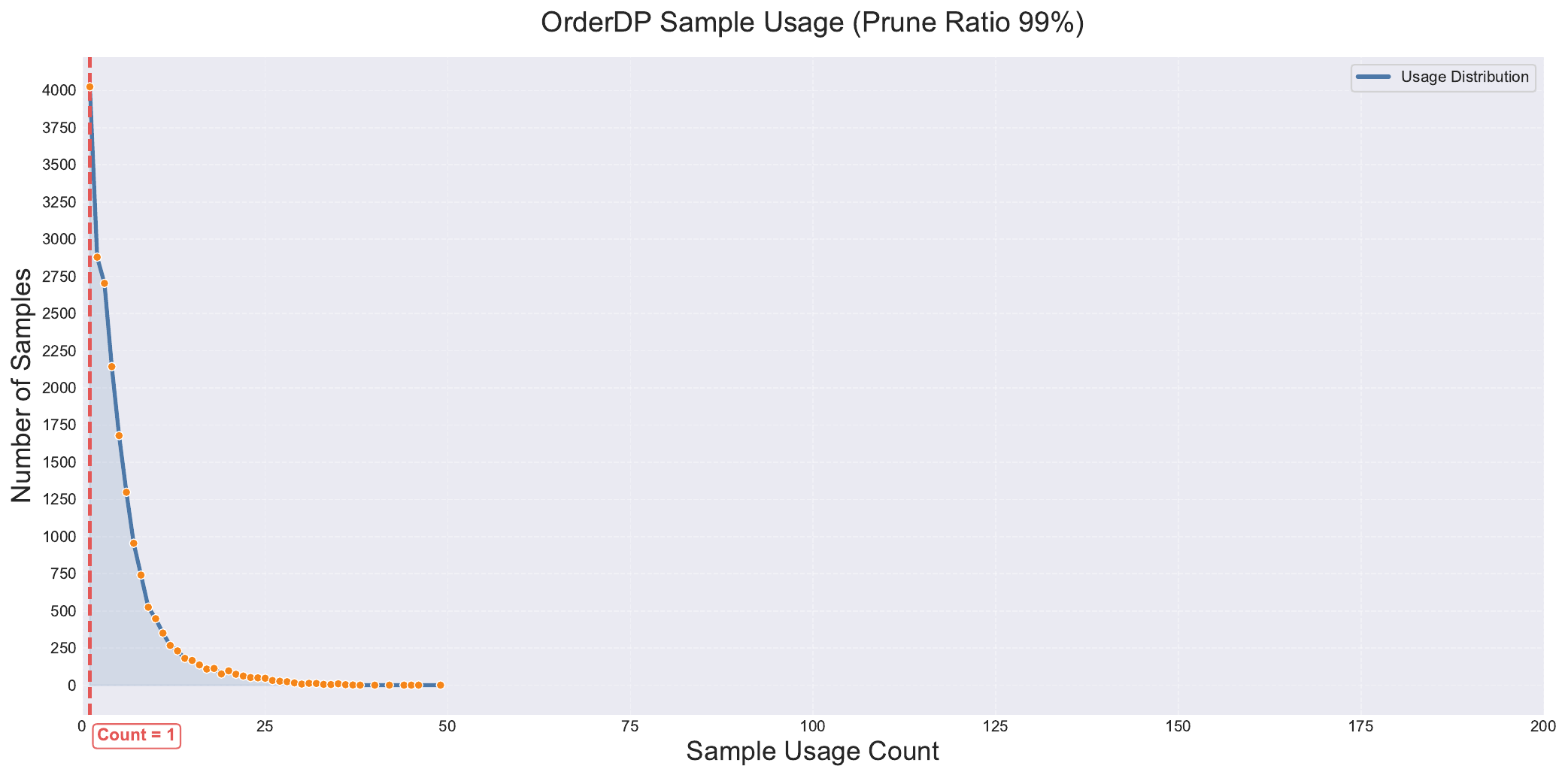}
    \caption{Sample usage count distribution under a 99\% prune ratio.
    Even under extreme pruning, every sample is selected at least once.}
    \label{fig:sample_usage_99}
    \vspace{-5pt}
\end{figure}

\subsection{Time-to-Accuracy Curves}
\label{app:time_accuracy}

To complement the wall-clock results in Figure~3 and to directly address the reviewer's
suggestion on evaluating time-to-accuracy, we report curves showing the relationship between
training time and accuracy. These curves provide a practical view of how fast different
methods reach comparable accuracy levels in real training scenarios.

Figures~\ref{fig:time_acc_40} and \ref{fig:time_acc_70} present the Time-to-Accuracy curves
on CIFAR-10 using ResNet-18 under prune ratios of 40\% and 70\%. Consistent with our findings
throughout the paper, \themodel\ achieves faster accuracy improvement and maintains stable
convergence compared with both InfoBatch and Random, especially under higher pruning levels.

\begin{figure}[t]
    \centering
    \begin{minipage}[t]{0.48\linewidth}
        \centering
        \includegraphics[width=\linewidth]{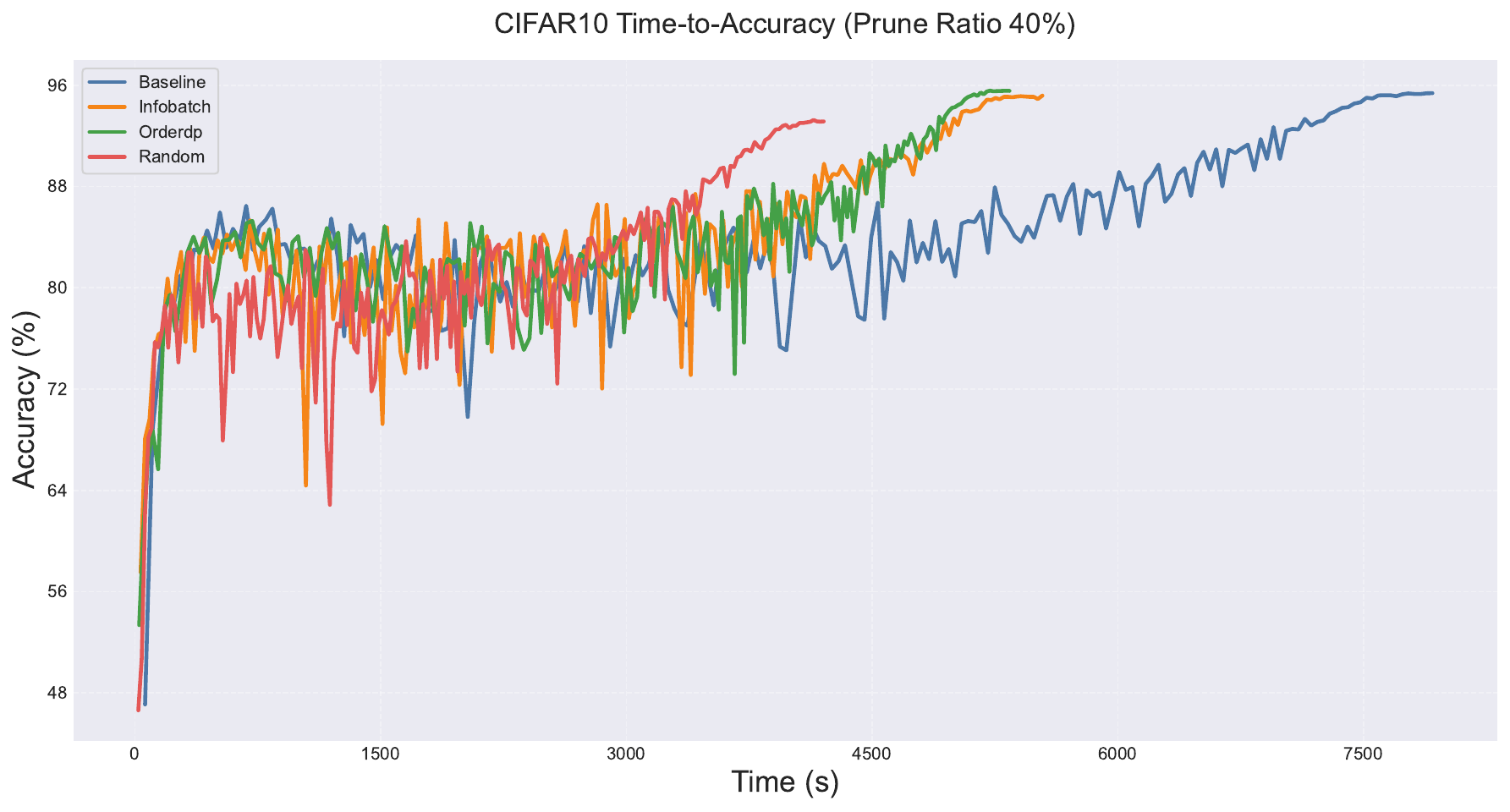}
        \caption{Time-to-Accuracy on CIFAR-10 at 40\% prune ratio.}
        \label{fig:time_acc_40}
    \end{minipage}\hfill
    \begin{minipage}[t]{0.48\linewidth}
        \centering
        \includegraphics[width=\linewidth]{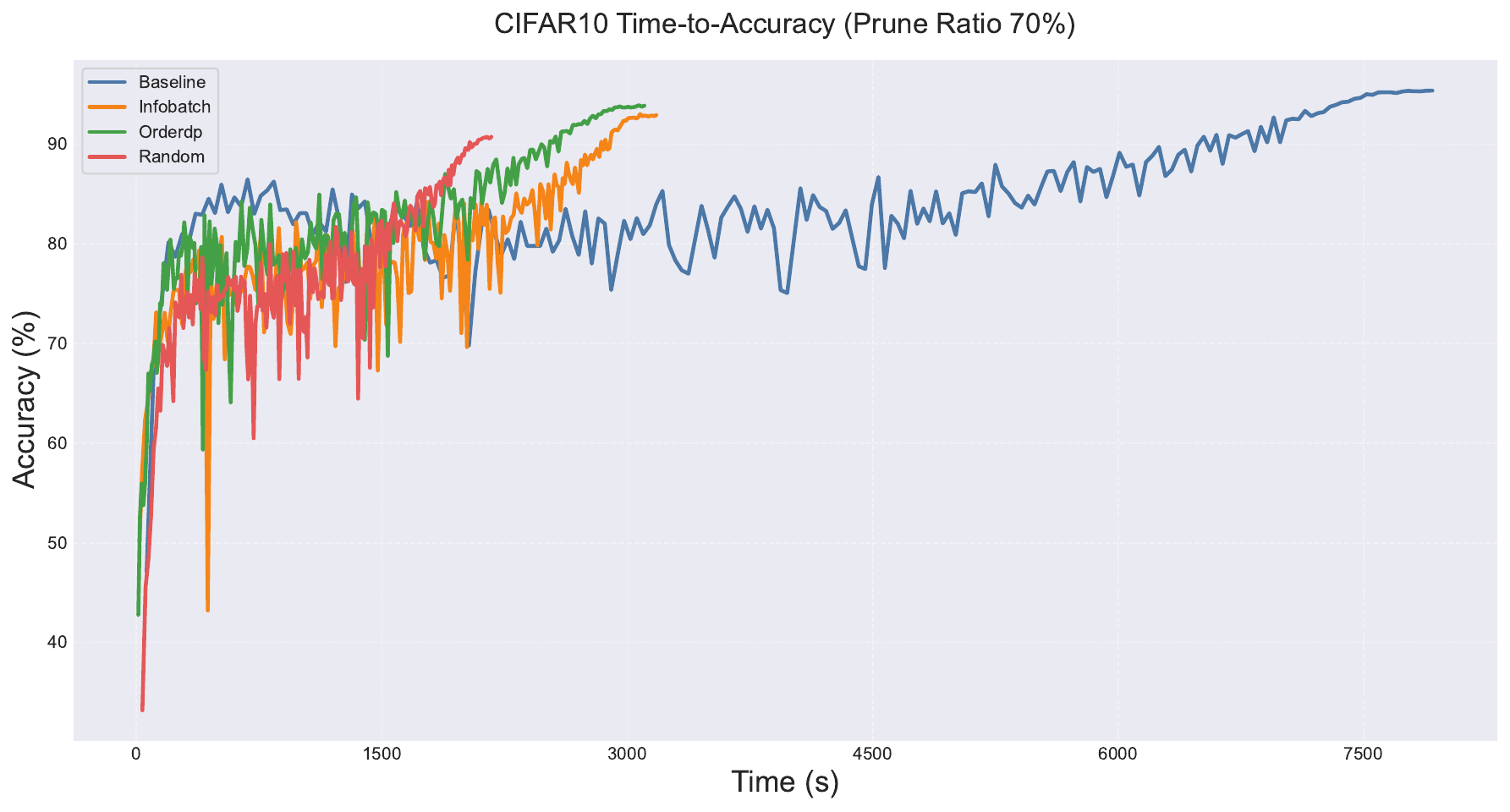}
        \caption{Time-to-Accuracy on CIFAR-10 at 70\% prune ratio.}
        \label{fig:time_acc_70}
    \end{minipage}
    \vspace{-10pt}
\end{figure}

\subsection{Stability Analysis}
\label{app:stability}

We further include a stability study of dynamic pruning methods under multiple independent runs. In this analysis, we evaluate CIFAR-100 with ResNet-18 at a 70\% real pruning ratio across 10 runs. InfoBatch often exhibits large mid-training gradient oscillations and occasional convergence failures, while OrderDP consistently converges smoothly in every trial. Moreover, InfoBatch relies on late-stage full-data ``annealing'' to stabilize training, whereas OrderDP maintains exact pruning control via $(s, q)$ without requiring such rescue. This highlights the robustness and practicality of our approach. Under aggressive pruning, InfoBatch’s rescaling further causes severe fluctuations in both gradient and accuracy. 

Detailed results are summarized in Table~\ref{tab:stability_cifar100} (real prune ratio $\approx 0.61$, InfoBatch nominal prune ratio $\approx 0.99$; 200 epochs; learning rate = 0.03; batch size = 128; averaged over 10 runs).

\begin{table}[t]
\centering
\caption{Stability comparison on CIFAR-100 with ResNet-50 across 10 runs under varying \textbf{training progress} (percentage of epochs). Reported are \textbf{Accuracy (\% ± Std)} and \textbf{Gradient Std}. OrderDP shows smoother convergence and eliminates instability observed in InfoBatch.}
\label{tab:stability_cifar100}
\begin{adjustbox}{max width=\linewidth}
\begin{tabular}{ccccccc}
\toprule
Method & Metric & 0--30\% & 30--50\% & 50--70\% & 70--100\% & Final \\
\midrule
\multirow{2}{*}{$\epsilon$-Greedy} 
& Acc $\pm$ Std & 48.01 $\pm$ 4.60 & 49.77 $\pm$ 2.54 & 52.61 $\pm$ 1.42 & 66.24 $\pm$ 1.23 & 74.77 $\pm$ 0.30 \\
& Grad $\pm$ Std & 3.08 $\pm$ 1.19 & 3.49 $\pm$ 0.62 & 2.78 $\pm$ 0.57 & 2.05 $\pm$ 0.45 & 1.53 $\pm$ 0.37 \\
\midrule
\multirow{2}{*}{UCB} 
& Acc $\pm$ Std & 49.70 $\pm$ 4.80 & 50.66 $\pm$ 2.31 & 54.34 $\pm$ 1.82 & 67.97 $\pm$ 1.12 & 75.41 $\pm$ 0.40 \\
& Grad $\pm$ Std & 4.08 $\pm$ 1.69 & 3.69 $\pm$ 1.02 & 2.99 $\pm$ 0.27 & 2.35 $\pm$ 0.38 & 1.33 $\pm$ 0.14 \\
\midrule
\multirow{2}{*}{InfoBatch} 
& Acc $\pm$ Std & 45.74 $\pm$ 3.56 & 52.08 $\pm$ 2.55 & 47.72 $\pm$ 3.63 & 68.92 $\pm$ 3.01 & 76.72 $\pm$ 0.70 \\
& Grad $\pm$ Std & 7.35 $\pm$ 1.78 & 5.88 $\pm$ 1.24 & 4.35 $\pm$ 9.48 & 3.56 $\pm$ 4.55 & 2.89 $\pm$ 1.67 \\
\midrule
\multirow{2}{*}{OrderDP} 
& Acc $\pm$ Std & 48.00 $\pm$ 3.23 & 56.00 $\pm$ 2.01 & 61.00 $\pm$ 1.34 & 72.00 $\pm$ 0.56 & 78.32 $\pm$ 0.20 \\
& Grad $\pm$ Std & 4.08 $\pm$ 1.19 & 3.49 $\pm$ 0.62 & 2.78 $\pm$ 0.47 & 2.05 $\pm$ 0.55 & 1.03 $\pm$ 0.33 \\
\midrule
\multirow{2}{*}{Whole Dataset} 
& Acc $\pm$ Std & 56.58 $\pm$ 1.56 & 62.36 $\pm$ 0.76 & 66.84 $\pm$ 0.52 & 72.24 $\pm$ 0.40 & 80.60 $\pm$ 0.20 \\
& Grad $\pm$ Std & 3.88 $\pm$ 0.79 & 3.09 $\pm$ 0.41 & 2.45 $\pm$ 0.42 & 1.93 $\pm$ 0.55 & 0.88 $\pm$ 0.20 \\
\bottomrule
\end{tabular}
\end{adjustbox}
\end{table}

\section{Extended Analysis of Gradient Bias}
\label{app:gradient_bias}

\subsection{Mechanisms and Toy Example}
This subsection provides additional clarification on how OrderDP eliminates biased gradient estimation. The method addresses the issue through three main mechanisms. \textbf{Uniform exploration.} Before pruning, $s$ points are randomly sampled so that every sample, regardless of gradient magnitude, has equal probability of entering the coreset. This ensures the gradient distribution is much closer to that of full-data SGD compared with InfoBatch’s biased rescaling strategy. \textbf{Unbiased surrogate loss.} Instead of pruning the original loss directly, a new loss $\mathcal{L}_q$ is constructed with closed-form weights $\gamma_j$ derived from a two-stage sampler. Theorem~\ref{surrogate loss} guarantees each update is an unbiased estimator of $\nabla \mathcal{L}_q(\theta)$, while Theorems~\ref{prop1}, \ref{converge}, and~\ref{gene} provide convergence and generalization guarantees, avoiding the need for late-stage full-data annealing. \textbf{Exact pruning control.} By explicitly setting $(s, q)$, any target prune ratio (e.g., 70\%, 80\%, 90\%) can be realized precisely, unlike InfoBatch’s mean-threshold scheme, which fluctuates around $\sim 77\%$.

To make the difference clear, we present a toy comparison under an 80\% prune ratio with 5 samples of gradients $g=\{1,2,3,4,5\}$. In \textbf{InfoBatch (mean-threshold + rescale)}, samples 3, 4, 5 are always kept, while 1 and 2 are included with probability $0.2$, and their gradients rescaled by a factor $1/(1-0.8)=5$. This leads to four possible outcomes summarized in Table~\ref{tab:infobatch_example}.

\begin{table}[t]
\centering
\caption{Toy example of InfoBatch under an 80\% prune ratio with 5 gradients. 
The table shows the kept set, probability of selection, rescaled gradients, average gradient, and prune rate.}
\label{tab:infobatch_example}
\begin{adjustbox}{max width=\linewidth}
\begin{tabular}{lcccc}
\toprule
Kept set & Probability & Gradients & Avg. grad. & Prune rate \\
\midrule
$\{3,4,5\}$ & 0.64 & 3, 4, 5 & 4.0 & 0.60 \\
$\{1,3,4,5\}$ & 0.16 & $5\cdot 1, 3,4,5 = 5,3,4,5$ & 4.25 & 0.20 \\
$\{2,3,4,5\}$ & 0.16 & $5\cdot 2, 3,4,5 = 10,3,4,5$ & 5.5 & 0.20 \\
$\{1,2,3,4,5\}$ & 0.04 & $5\cdot 1, 5\cdot 2, 3,4,5 = 5,10,3,4,5$ & 5.4 & 0.00 \\
\bottomrule
\end{tabular}
\end{adjustbox}
\vspace{-10pt}
\end{table}

From Table~\ref{tab:infobatch_example}, the expected gradient estimate is $4.336$ with prune ratio $<0.8$, indicating bias.

In contrast, for \textbf{OrderDP (5 samples, 80\% prune)}, we randomly sample a batch with size $s \in \{1,\dots,5\}$ and select the top-1 element. The detailed probability calculation for each index is as follows:

\[
\begin{aligned}
\{1\}: & \quad \tfrac{1}{5}\cdot\tfrac{1}{5} \;(s=1) = \tfrac{1}{25}, \\
\{2\}: & \quad \tfrac{1}{5}\cdot\tfrac{1}{5} \;(s=1) + \tfrac{1}{10}\cdot\tfrac{1}{5} \;(s=2) = \tfrac{3}{50}, \\
\{3\}: & \quad \tfrac{1}{5}\cdot\tfrac{1}{5} \;(s=1) + \tfrac{3}{10}\cdot\tfrac{1}{5} \;(s=2) + \tfrac{1}{10}\cdot\tfrac{1}{5} \;(s=3) = \tfrac{3}{25}, \\
\{4\}: & \quad \tfrac{1}{5}\cdot\tfrac{1}{5} \;(s=1) + \tfrac{3}{10}\cdot\tfrac{1}{5} \;(s=2) + \tfrac{3}{10}\cdot\tfrac{1}{5} \;(s=3) + \tfrac{1}{10}\cdot\tfrac{1}{5} \;(s=4) = \tfrac{9}{50}, \\
\{5\}: & \quad \tfrac{1}{5}\cdot\tfrac{1}{5} \;(s=1) + \tfrac{2}{5}\cdot\tfrac{1}{5} \;(s=2) + \tfrac{3}{5}\cdot\tfrac{1}{5} \;(s=3) + \tfrac{4}{5}\cdot\tfrac{1}{5} \;(s=4) + \tfrac{1}{5}\;(s=5) = \tfrac{3}{5}.
\end{aligned}
\]

The expected gradient estimate under this distribution is $4.06$ with prune ratio exactly $0.8$. Therefore, OrderDP not only maintains the target pruning ratio precisely but also achieves gradient estimates closer to the true full gradient ($=3$), effectively eliminating the bias observed in InfoBatch.

\subsection{Gradient Direction Analysis}
In addition to gradient magnitude analysis, we also examine gradient directions by measuring the \emph{cosine similarity} between the gradients computed with each pruning method and the full-data gradient at matched checkpoints (same model weights). 
We train on CIFAR-100 with ResNet-18 for 200 epochs, evaluate at 20\%, 40\%, 60\%, 80\%, and 100\% of training progress, using a learning rate of 0.03 and batch size of 128. 
Each setting is repeated 5 times, and we report mean $\pm$ std. 
Results under pruning ratios 40\% and 70\% are summarized in Table~\ref{tab:cosine_similarity}.

\begin{table}[t]
\centering
\caption{Cosine similarity between pruned and full-data gradients on CIFAR-100 (ResNet-18) under pruning ratios 40\% and 70\%. 
Each experiment is repeated 5 times, and mean $\pm$ std are reported. Higher values indicate stronger alignment with full-data gradients.}
\label{tab:cosine_similarity}
\begin{adjustbox}{max width=\linewidth}
\begin{tabular}{c|c|c|c|c|c|c|c}
\toprule
Prune Ratio & Method & 0--20\% & 20--40\% & 40--60\% & 60--80\% & 80--100\% & 100\% \\
\midrule
\multirow{2}{*}{40\%} 
 & InfoBatch & 0.915$\pm$0.044 & 0.940$\pm$0.008 & 0.916$\pm$0.007 & 0.904$\pm$0.011 & 0.897$\pm$0.008 & 0.895$\pm$0.009 \\
 & OrderDP   & \textbf{0.943$\pm$0.035} & \textbf{0.951$\pm$0.007} & \textbf{0.934$\pm$0.008} & \textbf{0.921$\pm$0.009} & \textbf{0.908$\pm$0.014} & \textbf{0.906$\pm$0.011} \\
\midrule
\multirow{2}{*}{70\%} 
 & InfoBatch & 0.825$\pm$0.037 & 0.807$\pm$0.027 & 0.763$\pm$0.021 & 0.758$\pm$0.023 & 0.743$\pm$0.026 & 0.716$\pm$0.029 \\
 & OrderDP   & \textbf{0.853$\pm$0.048} & \textbf{0.896$\pm$0.018} & \textbf{0.901$\pm$0.015} & \textbf{0.893$\pm$0.014} & \textbf{0.868$\pm$0.021} & \textbf{0.844$\pm$0.018} \\
\bottomrule
\end{tabular}
\end{adjustbox}
\end{table}

These results demonstrate that \textbf{OrderDP’s gradients align more closely with full-data gradients}, particularly at high pruning ratios, thereby reducing directional bias compared with InfoBatch.

\section{Limitations and Future Work}
While \themodel excels on moderate-scale vision benchmarks, its performance on very large architectures, streaming inference scenarios, and heterogeneous hardware platforms, as well as in self-supervised or multi-modal settings, remains to be explored. In future work, we will extend \themodel to adaptive pruning schedules, investigate its integration with transformer and graph models, and study its behavior under distribution shift and noisy labels.

Another promising direction is to develop noise-robust variants of \themodel. 
Although the current work focuses on the top-$q$ strategy, the surrogate-loss framework introduced in this 
paper is more general and can naturally incorporate min-$q$ selection or mixed hard/easy sampling schemes 
by modifying the weight structure $\{\gamma_j\}$. Such extensions may help suppress extreme outliers, improve 
stability under label noise, and adapt the pruning strategy across different stages of training 
(e.g., hard-sample emphasis in early stages and easy-sample regularization in later stages). 
We plan to further explore these variants and evaluate their performance in noisy or adversarial settings.

\section{The Use of Large Language Models (LLMs)}
Large language models (LLMs) were used solely for linguistic refinement and editing of the manuscript. 
All scientific ideas, methodological contributions, and experimental results are entirely conceived, implemented, and validated by the authors.

\end{document}